\newtheorem{thm}{Theorem}
\newtheorem{prop}[thm]{Proposition}
\newtheorem{lemma}[thm]{Lemma}
\newcommand{\R}{\mathbb{R}}
\newcommand{\sE}{{\mathcal E}}
\newcommand{\sF}{{\mathcal F}}
\newcommand{\sH}{{\mathcal H}}
\newcommand{\sX}{{\mathcal X}}
\renewcommand{\a}{\alpha}
\newcommand{\g}{\gamma}
\newcommand{\eps}{\epsilon}
\newcommand{\sign}{\mathop{\mathrm{sign}}}
\newcommand{\argmin}{\operatornamewithlimits{arg\ min}}
\newcommand{\ind}[1]{{\bf 1}_{\{#1\}}}
\newcommand{\abs}[1]{\left\lvert #1 \right\rvert}
\newcommand{\set}[1]{\left\{#1\right\}}
\newcommand{\brac}[1]{\left[#1\right]}
\newcommand{\bcase}{\left\{ \begin{array}{ll} }
\newcommand{\ecase}{\end{array} \right. }
\newcommand{\ev}{\mathbb{E}}
\newcommand{\ee}[2]{\ev_{#1}\brac{#2}}
\newcommand{\e}[1]{\mbe\brac{#1}}
\newcommand{\mbe}{\mathbb{E}}
\newcommand{\Ehat}{\widehat{\sE}}
\newcommand{\Yt}{\tilde{Y}}
\newcommand{\yt}{\tilde{y}}
\begin{document}

%

%

\title{Learning from Multiple Corrupted Sources, \\ with Application to \\
Learning from Label Proportions}
\author{ Clayton Scott and Jianxin Zhang \\
University of Michigan \\ Electrical Engineering and Computer Science }
\maketitle


\begin{abstract}
We study binary classification in the setting where the learner is presented with multiple corrupted training samples, with possibly different sample sizes and degrees of corruption, and introduce an approach based on minimizing a weighted combination of corruption-corrected empirical risks. We establish a generalization error bound, and further show that the bound is optimized when the weights are certain interpretable and intuitive functions of the sample sizes and degrees of corruptions. We then apply this setting to the problem of learning with label proportions (LLP), and propose an algorithm that enjoys the most general statistical performance guarantees known for LLP. Experiments demonstrate the utility of our theory.
\end{abstract}

\section{INTRODUCTION}

We study the problem of binary classification in the setting where the learner does not have access to a conventional training data set with correctly labeled instances. Instead, the learner has access to several data sets for which the true labels have been randomly corrupted, with each data set having possibly different sample size and degree of corruption. Previous work has considered learning from a single corrupted data set, but the problem considered here raises the natural question of how best to aggregate and weight the information from these multiple corrupted data sets according to the sample size and degree of corruption. We extend the method of corruption corrected losses \citep{natarajan18jmlr} to this setting and establish a generalization error bound for kernel-based predictors. By optimizing this bound, we obtain a precise and interpretable scheme for aggregating the various corrupted sources according to the degree of corruption.

We then apply our framework to the problem of learning from label proportions (LLP), which is another weak supervision setting for binary classification. In this problem, training data come in the form of bags. Each bag contains unlabeled feature vectors (patterns) and is annotated with the proportion of patterns arising from class 1. We argue that this problem can be reduced to the first problem studied, and apply our results to obtain the most general theoretical analysis of this problem to date.


\subsection{Related Work and Contributions}

Our work is in the setting of classification with label-dependent label noise.
Prior work has focused on a single noisy dataset. \citet{blanchard16ejs} introduce a way to estimate the label noise proportions, while \citet{natarajan18jmlr} introduce a loss-correction framework when the noise proportions are known (or can be estimated). There have also been recent efforts to develop  noise-robust losses for training neural networks \citep{patrini17dnn,sabuncu18nips}.

\citet{rooyen18jmlr}, who extend the method of \citet{natarajan18jmlr} to other weakly supervised settings, considered the question of learning from multiple independent sources. Like us, they also prove a generalization error bound for the weighted empirical risk based on multiple corruption-corrected losses. However, their weights are fixed and depend only on the sample sizes, and they assume the corrected losses are bounded, which precludes many common losses. Relative to this work, our contribution is to prove a bound that holds for Lipschitz losses and arbitrary weights, with the optimized weights depending on both the sample sizes and degrees of corruption in an intuitive way.

The problem of learning a classifier from multiple corrupted datasets arises in some applications such as crowdsourcing and learning from multiple weak labeling functions \citep{ratner16nips}. In these applications, the corrupted datasets often arise from different annotators assigning labels to the same unlabeled datasets. This leads the datasets to be dependent, whereas our analysis assumes independence. However, if unlabeled data are very abundant as envisioned by \cite{ratner16nips}, independent unlabeled datasets could be provided to the different annotators, thus fitting our assumption.

LLP is motivated by a number of real-world applications including particle physics \citep{komiske18}, vote prediction \citep{sun17}, and image classification \citep{ding2017}. Several algorithms have been proposed in recent years, and we refer the reader to the recent summary in \citet{arnold19tr}.


Statistical analysis of LLP has been studied by a handful of papers. \citet{quadrianto09jmlr} reduce LLP to estimation of a certain mean operator, and use Rademacher complexity to establish uniform convergence guarantees for this estimate. \citet{nolabelnocry} extend the work of \citet{quadrianto09jmlr} in several ways, including a generalization error bound for LLP based on a notion of bag-Rademacher complexity, and present two algorithms for approximately optimizing the associated bag-empirical risk. Neither of these works establish consistency guarantees for a classification performance measure, and both focus on linear methods. \citet{yu15tr} study the algorithm of minimizing the ``empirical proportion risk," which seeks a classifier that best reproduces the bag label proportions. Bounds on the accuracy of the resulting classifier are established, but only under two restrictive assumptions, namely, that the bags are all very pure (label proportions close to 0 or 1) or that the feature space is finite. Relative to these works, our contributions are to establish distribution-free generalization error analysis and universal consistency with respect to a classification performance measure.



Our work may also be viewed as a novel setting for multi-source domain adaptation, where the target is the clean distribution. Typically, domain adaptation methods are classified as supervised, semi-supervised, or unsupervised, depending on whether the feature vectors from the target distribution are fully labeled, partially labeled, or unlabeled \citep{pan:10:kde}. In our case, however, we do not even assume access to unlabeled data from the target (clean) distribution.

Finally, we note that our framework yields a generalization error bound and consistency for the problem of classification with feature dependent label noise \citep{menon18,cannings18,scott19alt}, which has not previously been addressed. This follows by taking each corrupted data set to consist of a single point. We elaborate below.





\subsection{Notation and Terminology}

A binary classification loss function, referred to simply as a {\em loss} in this work, is a function $\ell:\R \times \{-1,1\} \to [0,\infty)$. We say a loss $\ell$ is $L$-Lipschitz if for every $y \in \{-1,1\}$, and every $t, t' \in \R$, $|\ell(t,y)-\ell(t',y)| \le L|t - t'|$. We say a loss is Lipschitz if it is $L$-Lipschitz for some $L$. We say a loss is a {\em margin loss} if it has the form $\ell(t,y) = \varphi(yt)$ for some $\varphi:\R \to [0,\infty)$.

A decision function is a measurable function $f:\sX \to \R$. The classifier induced by a decision function $f$ is the function $x \mapsto \sign(f(x))$. We will only consider classifiers induced by a decision function. In addition, we will often refer to a decision function as a classifier, in which case we mean the induced classifier.

Finally, define the probability simplex $\Delta^N := \{w \in \R^N \, | \, w_i \ge 0 \, \forall i, \text{ and } \sum_i w_i =1\}$.

\subsection{Outline}

Sec. \ref{sec:mult} presents our generalization error analysis for learning from multiple corrupted sources, and Sec. \ref{sec:llp} applies it to LLP. Discrimination rules are discussed in Sec. \ref{sec:discrim}, while Sec. \ref{sec:exp} present experiments illustrating our theory. Sec. \ref{sec:disc} offers some concluding comments.

\section{LEARNING FROM MULTIPLE CORRUPTED SOURCES}
\label{sec:mult}

We first review the work of \citet{natarajan18jmlr} before presenting two settings for learning from multiple corrupted sources.

\subsection{Learning from A Single Corrupted Source}

Let $\sX$ be the feature space and $P$ a probability distribution on $\sX \times \{-1,1\}$, which shall denote a clean/uncorrupted distribution. Let $\rho = (\rho^{+},\rho^{-}) \in [0,1]^2$ be such that $\rho^{-} + \rho^{+} < 1$, referred to as {\em label noise proportions}. Given $P$ and $\rho$, we define a new probability distribution $P^{\rho}$ on $\sX \times \{-1,1\}$ as follows. To generate a realization $(X,\Yt)$ from $P^{\rho}$, first draw a realization $(X,Y)$ from $P$, and then let the conditional distribution of $\Yt$ given $Y$ be
\begin{align*}
\Yt \, | \, Y = 1 \ & \sim \ \text{Ber}(1 - \rho^+) \\
\Yt \, | \, Y = -1 \ & \sim \ \text{Ber}(\rho^-).
\end{align*}
In words, the noisy label $\Yt$ is obtained by randomly corrupting the true label $Y$ in a manner depending on the value of $Y$, but not on $X$, such that $\rho^+$ is the probability that a $+1$ is flipped to a $-1$, and $\rho^-$ is the probability that a $-1$ is flipped to a $+1$. Below we use $p(\yt | y ; \rho)$ to denote this conditional distribution of $\Yt$ given $Y = y$, with noise proportions $\rho$.

Let $\ell:\R \times \{-1,1\} \to [0,\infty)$ be a loss. Following \citet{natarajan18jmlr}, given label noise proportions $\rho = (\rho^{+},\rho^{-})$, define the {\em corrected loss} $\ell^{\rho}$ by
\begin{align*}
\ell^{\rho}(t,1):= \frac{(1 - \rho^{-})\ell(t,1) - \rho^+ \ell(t,-1)}{1 - \rho^{-} - \rho^{+}} \\
\ell^{\rho}(t,-1):= \frac{(1 - \rho^{+})\ell(t,-1) - \rho^- \ell(t,1)}{1 - \rho^{-} - \rho^{+}}.
\end{align*}
The corrected loss (also referred to as an unbiased loss) debiases the affects of noisy labels as captured by the following result.

\begin{lemma}
\label{lem:corrected}
With $\ell, \rho$ and $\ell^\rho$ as in the preceding paragraph, for any $y \in \{-1,1\}$ and $t \in \R$,
$$
\mbe_{\Yt \sim p(\yt | y; \rho)}[\ell^\rho(t,\Yt)] = \ell(t,y).
$$
Additionally, consider any distribution $P$ on $\sX \times \{-1,1\}$ with associated corrupted distribution $P^\rho$. If $f$ is a decision function such that $\ell(f(\cdot),y)$ is bounded for all $y$, then
$$
\mbe_{(X,\Yt)\sim P^\rho}[\ell^\rho(f(X),\Yt)] = \ev_{(X,Y)\sim P}[\ell(f(X),Y)].
$$
\end{lemma}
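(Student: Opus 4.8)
The plan is to establish the two claims in sequence, with the distributional identity following easily from the pointwise one once the right conditioning is set up. Both are essentially verifications that the correction defining $\ell^\rho$ was engineered precisely to undo the label flipping encoded by $\rho$.

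First I would prove the pointwise identity $\mbe_{\Yt \sim p(\yt|y;\rho)}[\ell^\rho(t,\Yt)] = \ell(t,y)$ for fixed $t$ and each $y$. Since the corruption is Bernoulli, the expectation is just a two-term weighted average of $\ell^\rho(t,1)$ and $\ell^\rho(t,-1)$. For $y=1$ the weights are $1-\rho^+$ on $\ell^\rho(t,1)$ and $\rho^+$ on $\ell^\rho(t,-1)$ (as $\rho^+$ is the flip probability $+1\to-1$); for $y=-1$ they are $\rho^-$ and $1-\rho^-$. Substituting the definition of $\ell^\rho$ and clearing the common denominator $1-\rho^+-\rho^-$, I would collect the coefficients of $\ell(t,1)$ and $\ell(t,-1)$ separately. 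The key computation is that in the $y=1$ case the two contributions to the $\ell(t,-1)$ coefficient cancel, while the coefficient of $\ell(t,1)$ simplifies via $(1-\rho^+)(1-\rho^-)-\rho^+\rho^- = 1-\rho^+-\rho^-$, which is exactly the denominator, leaving $\ell(t,1)$. The case $y=-1$ is the mirror image. Because $\rho^++\rho^-<1$, the denominator is nonzero and everything is well defined.

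Next I would obtain the distributional identity by iterated expectation. By construction of $P^\rho$, a draw $(X,\Yt)$ is produced by first drawing $(X,Y)\sim P$ and then drawing $\Yt$ from $p(\yt\mid Y;\rho)$, with $\Yt$ conditionally independent of $X$ given $Y$. Conditioning on $(X,Y)$ therefore gives
\[
\mbe_{(X,\Yt)\sim P^\rho}[\ell^\rho(f(X),\Yt)] = \mbe_{(X,Y)\sim P}\!\left[\,\mbe_{\Yt\sim p(\yt\mid Y;\rho)}[\ell^\rho(f(X),\Yt)]\,\right].
\]
Inside the inner expectation $f(X)$ is a constant, as it does not depend on $\Yt$, so applying the pointwise identity with $t=f(X)$ and $y=Y$ collapses the bracket to $\ell(f(X),Y)$, and the outer expectation is then $\ev_{(X,Y)\sim P}[\ell(f(X),Y)]$, as claimed.

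The only genuine subtlety, and the step I would be most careful about, is justifying the tower property: this requires the integrands to be integrable, so that the iterated expectation is legitimate and no $\infty-\infty$ cancellation is being performed. This is exactly what the hypothesis that $\ell(f(\cdot),y)$ is bounded for each $y$ provides: it makes $\ell^\rho(f(\cdot),\cdot)$ bounded as well, being a fixed linear combination of bounded functions with coefficients controlled by $1/(1-\rho^+-\rho^-)$, so all expectations are finite and the interchange is valid. Beyond this, the argument is pure algebra, and the conceptual content lies entirely in the first part, where the correction is seen to invert the $2\times 2$ noise transition.
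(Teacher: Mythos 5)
Your proof is correct and takes essentially the same route as the paper: the paper cites \citet{natarajan18jmlr} for the first identity (which is exactly the two-term algebraic verification you carry out, including the cancellation $(1-\rho^+)(1-\rho^-)-\rho^+\rho^-=1-\rho^+-\rho^-$) and then obtains the second identity as an immediate corollary by conditioning on $(X,Y)$, with the boundedness hypothesis serving precisely the role you identify, namely guaranteeing existence of the expectations so the tower property applies.
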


The first part of the lemma was establish by \citet{natarajan18jmlr}, and the second part follows as an immediate corollary, where the boundedness assumption ensures the existence of the expectation. Given noisy data $(X_1,\Yt_1),\ldots,(X_n,\Yt_n) \sim P^\rho$, this result immediately suggests learning a classifier by empirical risk minimization over the noisy data,
$$
\widehat{f} = \argmin_{f \in \sF} \ \frac1{n} \sum_{j=1}^n \ell^\rho(f(X_j),\Yt_j),
$$
where $\sF$ is some class of decision functions. \citet{natarajan18jmlr} show that the generalization error, i.e., the difference between the  corrupted empirical risk and the uncorrupted true risk, is bounded uniformly over $\sF$, in terms of the Rademacher complexity of $\sF$.

Below, we extend the above framework to the setting where there are multiple contaminated sources with different degrees of corruption.

\subsection{Learning from Multiple Corruptions of a Common Source}
\label{sec:common}

We now consider the problem of learning from multiple corrupted sources. In this section, we assume observations are drawn from one of several distributions, all of which are corruptions of a common clean distribution $P$. Our goal is to learn a decision function $f$ to minimize the risk
\begin{equation}
\label{eqn:risk}
\sE_{P}(f) := \mbe_{(X,Y)\sim P} [\ell(f(X),Y)].
\end{equation}
The clean distribution is only observed through several corrupted samples. Thus, let $\rho_i = (\rho_i^{+},\rho_i^{-})$, $i=1,\ldots,N$ be $N$ pairs of label noise proportions, and let $P^{\rho_1}, \ldots, P^{\rho_N}$ be the associated corrupted distributions. For each $i$, let  $S_i = ((X_{ij}, \Yt_{ij}))_{1 \le j \le n_i}$ be $n_i$ iid draws from $P^{\rho_i}$, and assume the samples $S_1,\ldots,S_N$ are themselves independent.

To learn from this data, we propose to minimize a weighted sum of empirical risks
$$
\sum_{i=1}^N w_i \left[\frac1{n_i} \sum_{j=1}^{n_i} \ell^{\rho_i}(f(X_{ij}),\Yt_{ij}) \right]
$$
where $w \in \Delta^N$ is a tuning parameter.

Intuitively, samples $S_i$ with less corruption should receive larger weights. We confirm this intuition by establishing a generalization error bound, and then showing that the bound is indeed optimized by assigning larger weights to less corrupted samples.

To obtain an explicit bound, we specialize to kernel algorithms. Thus, let $k$ be a symmetric, positive definite (SPD) kernel, and let $B_k(R)$ denote the ball of radius $R>0$, centered at the origin, in the reproducing kernel Hilbert space (RKHS) associated to the kernel. Assume the kernel is bounded by the constant $K$. For example, in the case of the Gaussian kernel $k(x,x') = \exp(-\gamma \|x - x'\|^2)$, $K=1$.

For all generalization error bounds in this paper, we assume that $\ell$ is a {\em margin loss}, that is, $\ell(t,y) = \varphi(yt)$ for some $\varphi: \R \to [0,\infty)$. This assumption could be dropped at the expense messier expressions.
\begin{thm}
\label{thm:gen}
Let $\ell$ be an $L$-Lipschitz margin loss and $k$ an SPD kernel bounded by $K$. Let $P$ be a distribution on $\sX \times \{-1,1\}$, and for each $i$, let $\rho_i = (\rho_i^+, \rho_i^-) \in [0,1]^2$ such that $\rho_i^- + \rho_i^+ < 1$. For all $R > \varphi(0)/KL$, $w \in \Delta^N$, and $0 < \delta \le \frac14$, we have
with probability at least $1-\delta$ with respect to the draws of the corrupted samples $S_1, \ldots, S_N$,
\begin{equation}
\label{eq:gen}
\sup_{f \in B_{k}(R)} \abs{\sum_{i=1}^N w_i \left[ \frac1{n_i} \sum_{j=1}^{n} \ell^{\rho_i}(f(X_{ij}),\Yt_{ij}) \right] -  \sE_{P}(f)} \\
\leq 4 K R L \sqrt{\sum_{i=1}^N \frac{w_i^2}{n_i} \left( \frac{1+|\rho_i^+ - \rho_i^-|}{1-\rho_i^- - \rho_i^+} \right)^2  \left(\frac{\log (2/\delta)}{2} \right) }.
\end{equation}
\end{thm}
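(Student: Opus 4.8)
The plan is to control the two-sided deviation
$$\Phi := \sup_{f \in B_k(R)} \abs{\sum_{i=1}^N w_i\paren{\frac1{n_i}\sum_{j=1}^{n_i}\ell^{\rho_i}(f(X_{ij}),\Yt_{ij})} - \sE_P(f)}$$
by the classical route of (i) concentrating $\Phi$ about its mean with a bounded-difference inequality, and (ii) bounding $\mbe[\Phi]$ by symmetrization and contraction. The key preliminary observation is that, by the second part of Lemma~\ref{lem:corrected}, each corrected empirical risk is an unbiased estimate of $\sE_P(f)$ (the boundedness hypothesis holds since $f\in B_k(R)$ forces $\abs{f(x)}\le KR$ and $\varphi$ is finite on $[-KR,KR]$); because $w\in\Delta^N$ sums to one, the whole weighted risk also has mean exactly $\sE_P(f)$. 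Thus $\Phi$ is genuinely a deviation from the mean and no bias term appears. Throughout I abbreviate $c_i := \frac{1+\abs{\rho_i^+-\rho_i^-}}{1-\rho_i^--\rho_i^+}$ and $L_i := Lc_i$; the whole proof amounts to showing that $L_i$ is the effective Lipschitz constant and oscillation scale of the $i$-th corrected loss.

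For the concentration step I would apply McDiarmid to $\Phi$ viewed as a function of the $\sum_i n_i$ independent sample points. Replacing one point $(X_{ij},\Yt_{ij})$ changes $\Phi$ by at most $\frac{w_i}{n_i}$ times the oscillation of $(t,y)\mapsto\ell^{\rho_i}(t,y)$ over $t\in[-KR,KR]$, $y\in\{-1,1\}$. This oscillation is exactly the step where the hypothesis $R>\varphi(0)/(KL)$ enters: it guarantees $\varphi(0)\le KRL$, hence $0\le\varphi(\pm t)\le\varphi(0)+KRL\le 2KRL$ on the relevant interval, and writing $\ell^{\rho_i}(t,\pm1)$ as its defining linear combination of $\varphi(t),\varphi(-t)$ and maximizing/minimizing each coordinate over $[0,2KRL]$ yields the clean bound $2KRL\,c_i$ for the oscillation. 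With bounded differences $\frac{w_i}{n_i}\cdot 2KRL\,c_i$, McDiarmid gives, with probability $1-\delta$,
$$\Phi \le \mbe[\Phi] + 2KRL\sqrt{\frac{\log(1/\delta)}{2}\sum_{i=1}^N\frac{w_i^2}{n_i}c_i^2}.$$

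To bound $\mbe[\Phi]$ I would symmetrize, introducing Rademacher variables $\sigma_{ij}$, which costs a factor $2$ and reduces the problem to bounding $\mbe_\sigma\sup_f\sum_{i,j}\frac{w_i}{n_i}\sigma_{ij}\ell^{\rho_i}(f(X_{ij}),\Yt_{ij})$. A per-coordinate contraction (Ledoux--Talagrand), using that $t\mapsto\ell^{\rho_i}(t,y)$ is $L_i$-Lipschitz, replaces each corrected loss by the linear functional $\frac{w_iL_i}{n_i}\sigma_{ij}f(X_{ij})$. Finally, the reproducing property $f(X_{ij})=\inner{f,k(X_{ij},\cdot)}$ together with $f\in B_k(R)$ and Jensen's inequality give
$$\mbe_\sigma\sup_{f\in B_k(R)}\sum_{i,j}\frac{w_iL_i}{n_i}\sigma_{ij}f(X_{ij}) \le R\sqrt{\sum_{i,j}\paren{\frac{w_iL_i}{n_i}}^2 k(X_{ij},X_{ij})} \le KRL\sqrt{\sum_{i=1}^N\frac{w_i^2}{n_i}c_i^2},$$
using $k(x,x)\le K^2$; hence $\mbe[\Phi]\le 2KRL\sqrt{\sum_i w_i^2 c_i^2/n_i}$.

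Combining the two displays, $\Phi\le 2KRL\sqrt{S}\,(1+\sqrt{\log(1/\delta)/2})$ where $S=\sum_i w_i^2c_i^2/n_i$. The stated bound is $4KRL\sqrt{S}\sqrt{\log(2/\delta)/2}$, so it remains to check $1+\sqrt{\log(1/\delta)/2}\le 2\sqrt{\log(2/\delta)/2}$; a short calculation shows this holds precisely once $\log(1/\delta)\ge\log 4$, i.e.\ for $\delta\le\frac14$, which is exactly the hypothesis. This is the role of the restriction on $\delta$: it lets the constant-order mean term be absorbed into the deviation term at the cost of enlarging the logarithm from $1/\delta$ to $2/\delta$. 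I expect the only real obstacle to be the oscillation computation in the McDiarmid step, since the corrected loss can be negative and one must track both labels simultaneously; getting the sharp constant $2KRL\,c_i$ (rather than a crude $2\norm{\ell^{\rho_i}}_\infty$ bound, which is too lossy) is what makes the final constant come out as $4KRL$ and the noise dependence as $c_i$.
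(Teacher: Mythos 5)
Your skeleton (McDiarmid, then symmetrization and contraction, then the RKHS Rademacher computation) matches the paper's, and several of your steps are correct and cleanly executed: the oscillation bound $2KRL\,c_i$ for the corrected loss on $[-KR,KR]$ with your $c_i=(1+\abs{\rho_i^+-\rho_i^-})/(1-\rho_i^--\rho_i^+)$ (this is indeed where $R>\varphi(0)/KL$ enters), the resulting bounded-difference constant $\tfrac{w_i}{n_i}\,2KRLc_i$, and the kernel bound $KRL\sqrt{S}$ with $S=\sum_i w_i^2c_i^2/n_i$. The genuine gap is the claim $\mbe[\Phi]\le 2KRL\sqrt{S}$. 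Because you run McDiarmid once on the \emph{two-sided} supremum $\Phi=\sup_f\abs{A(f)}$, you must then bound the expectation of that two-sided supremum; but symmetrization of a two-sided supremum yields $2\,\mbe\sup_f\abs{\sum_{i,j}\tfrac{w_i}{n_i}\sigma_{ij}\ell^{\rho_i}(f(X_{ij}),\Yt_{ij})}$ with the absolute value \emph{inside}, and the contraction step you invoke (per-coordinate Lipschitz replacement, no factor $2$, no requirement that the contractions vanish at $0$ --- Lemma~\ref{lemmaZhang} in the paper) is valid only for the one-sided version without the absolute value. The Ledoux--Talagrand contraction that tolerates absolute values costs an extra factor $2$ and requires centering $\ell^{\rho_i}(\cdot,y)$ at $0$, which produces yet another term. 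The cheapest correct repair within your structure is $\sup_f\abs{A(f)}=\max\bigl(\sup_f A(f),\,\sup_f(-A(f))\bigr)\le \sup_f A(f)+\sup_f(-A(f))$ (both suprema are pointwise nonnegative because $A(0)=0$, using $\ell^{\rho_i}(0,y)=\varphi(0)$ and $\sum_i w_i=1$), and bounding each one-sided expectation by twice the one-sided Rademacher complexity; this gives $\mbe[\Phi]\le 4KRL\sqrt{S}$, not $2KRL\sqrt{S}$.

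This factor is fatal to your closing arithmetic, which you yourself note is tight: with the corrected constant you would need $2+\sqrt{\log(1/\delta)/2}\le 2\sqrt{\log(2/\delta)/2}$, which fails at $\delta=\tfrac14$ (about $2.83$ versus $2.04$) and in fact for all $\delta$ down to roughly $10^{-3}$, so the stated bound is not obtained on the stated range of $\delta$. The paper sidesteps this by never forming the two-sided supremum before concentrating: it applies McDiarmid to each one-sided supremum $\sup_f A(f)$ and $\sup_f(-A(f))$ separately, bounds each expectation by $2\times$ the one-sided (absolute-value-free) Rademacher complexity, where Lemma~\ref{lemmaZhang} applies with no loss, and then takes a union bound at level $\delta/2$ per direction. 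That union bound is the actual source of the $\log(2/\delta)$; the hypothesis $\delta\le\tfrac14$ is used, as in your argument, only to absorb the mean term $2KRL\sqrt{S}$ into the deviation term via $1<\log(2/\delta)/2$. If you restructure your proof this way, every one of your component estimates goes through and yields exactly the theorem.
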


To interpret the bound, first consider the case where $n_i = n$ for all $i$, and $w_i = 1/N$. The bound becomes
$$
\frac{4 K R L}{\sqrt{Nn}}\sqrt{\frac1{N}\sum_{i=1}^N \left( \frac{1+|\rho_i^+ - \rho_i^-|}{1-\rho_i^- - \rho_i^+} \right)^2 \frac{\log (2/\delta)}{2}}
$$
which tends to zero provided {\em either} of $N$ or $n$ tends to $\infty$ (but not necessarily both), and $1-\rho_i^- - \rho_i^+$ remains bounded away from zero.

However, uniform weights do not optimize the bound. Returning to the setting where the $n_i$ are arbitrary, introduce the notation
\begin{equation}
\label{eqn:nsr}
c_i = \frac1{\sqrt{n_i}} \cdot \frac{1 + |\rho_i^+ - \rho_i^-|}{1 - \rho_i^- - \rho_+}.
\end{equation}
Define $1/c_i^2$ to be the ``signal-to-noise" ratio (SNR) for the $i$th sample, where $n_i$ reflects the ``signal"  while $(1 + |\rho_i^+ - \rho_i^-|)^2/(1 - \rho_i^- - \rho_+)^2$ captures the overall amount of noise. This notion of overall noise is smaller whenever $\rho_i^+$ and $\rho_i^-$ are small and when the label noise is {\em symmetric} \citep{kamalaruban18tr}.

It can easily be shown using Lagrange multiplier theory that the $w$ minimizing
\begin{equation}
\label{eqn:quadbnd}
\sum_{i=1}^N w_i^2  c_i^{2}
\end{equation}
subject to $w \in \Delta^N$ is $w_i = c_i^{-2}/\sum_j c_j^{-2}$, and that the optimal value of (\ref{eqn:quadbnd}) is
$\frac1{N}H(c_1^2, \ldots, c_N^2),$
where $H$ denotes the harmonic mean. The harmonic mean is dominated by the smallest $c_i$, with large ones having considerably less impact. In contrast, if we selected uniform weights $w_i = 1/N$, the harmonic mean is replaced by the arithmetic mean, $A(c_1^2, \ldots, c_N^2) = \frac1{N} \sum_i c_i^2$, which is much more sensitive to large values of $c_i$ (samples with very low SNR). As an illustration, suppose $N=10$, $n_i = n = 100$, and for $i < N$, $\rho_i^+ = \rho_i^- = 0.01$, while $\rho_N^+ = \rho_N^- = 0.49$. Then the ratio of arithmetic mean to harmonic mean exceeds 100.

In summary, the optimal weight $w_i$ is proportional to the signal-to-noise ratio of the $i$th sample. Thus, samples with larger sample size, less total label noise, and more balanced label noise, are preferred.

{\bf Additional remarks:}
{\bf (1)} Theorem \ref{thm:gen} and the other generalization error bounds below are all special cases of a master theorem provided in the supplemental material. The proof of the master theorem is based on an extension of Rademacher complexity analysis to a novel weighted Rademacher complexity. The factor $L\frac{1 + |\rho_i^+ - \rho_i^-|}{1 - \rho_i^- - \rho_+}$ in Thm. \ref{thm:gen} is the Lipschitz constant of $\ell^{\rho_i}$. In the event of a single sample with no noise, we recover known bounds \citep{mohri12} up to minor differences.

{\bf (2)} In the problem of binary classification with feature dependent label noise, there is a single dataset where each label is corrupted with probability $\rho^+(x)$ (if $Y=1$) or $\rho^-(x)$ (if $Y=-1$), where $x$ is the feature vector. This problem can be viewed as the special case where $n_i = 1$ and $\rho^{\pm}_i = \rho^{\pm}(x_i)$. Thus, Theorem \ref{thm:gen} applies to this setting provided the corruption probability functions $\rho^{\pm}(x)$ are known which, unfortunately, is rarely the case.

{\bf (3)} For the generalization error bound to imply a consistent learning procedure, it must converge to zero. We note one asymptotic setting where the bound does converge to zero for the optimized weights, but not for uniform weights. This is the setting where $N$ is fixed, and some but not all of the sample sizes $n_1, \ldots, n_N$ tend to $\infty$. To see that the optimized bound vanishes, just realize that the optimized weights lead to a smaller bound than when the $w_i$ are uniform on the samples with diverging sizes.

\subsection{Learning from Multiple Corrupted Sources with Varying Class Priors}
\label{sec:ber}

We now assume each corrupted source is associated to its own clean distribution, where the different clean distributions differ only in their class prior probabilities. In particular, let $P_1, \ldots, P_N$ denote the $N$ clean distributions, and let $\pi_i = P_i(Y=1)$. All $P_i$ are assumed to have the same class-conditional distributions, $P_+$ and $P_-$, which are two fixed distributions on $\sX$. This setting will be used to study LLP below.

In this setting it makes sense to look at a performance measure that depends only on the class-conditional distributions $P_+$ and $P_-$. Thus, we will adopt as our performance measure the balanced error rate (BER),
$$
\sE_{\pm}(f):= \frac12 \left( \underset{X\sim P_+}{\mbe}[\ell(f(X),1)] + \underset{X\sim P_-}{\mbe}[\ell(f(X),-1)] \right)
$$
The BER may be related to a cost-sensitive risk as follows. Let $\a = (\a^+,\a^-) \in \R_+^2$. Given a loss $\ell$, define the associated cost-sensitive loss
$$
\ell_\a(t,y):= \a_+ \ind{y=1} \ell(t,1) + \a_- \ind{y=-1} \ell(t,-1).
$$
Given a distribution $P$ on $\sX \times \{-1,1\}$, define the cost-sensitive risk with costs given by $\a$,
$$
\sE_{P,\a}(f) := \mbe_{(X,Y)\sim P} \ell_\a(f(X),Y).
$$
If $\pi$ is the prior class probability of $P$, and $\a = (\frac1{2\pi},\frac1{2(1-\pi)})$, then it can easily be shown that
$$
\sE_{P,\a}(f) = \sE_{\pm}(f).
$$

We seek to learn a classifier minimizing $\sE_{\pm}(f)$ given corrupted samples from each of $P_1, \ldots, P_N$. Thus, let $\rho_i = (\rho_i^{+},\rho_i^{-})$, $i=1,\ldots,N$ be $N$ pairs of label noise proportions, and let $P_1^{\rho_1}, \ldots, P_N^{\rho_N}$ be the associated corrupted distributions. For each $i$, let  $S_i = ((X_{ij}, \Yt_{ij}))_{1 \le j \le n_i}$ be $n_i$ iid draws from $P_i^{\rho_i}$, and assume  $S_1,\ldots,S_N$ are themselves independent.

We again propose to minimize a weighted sum of empirical risks
$$
\sum_{i=1}^N w_i \left[\frac1{n_i} \sum_{j=1}^{n_i} \ell_{\a_i}^{\rho_i}(f(X_{ij}),\Yt_{ij}) \right]
$$
where $\a_i = (\frac1{2\pi_i},\frac1{2(1-\pi_i)})$ and $\ell_{\a_i}^{\rho_i}$ denotes $(\ell_{\a_i})^{\rho_i}$, which is {\em not} the same as $(\ell^{\rho_i})_{\a_i}$. Adopting the same kernel setting as above, we have the following.
\begin{thm}
\label{thm:gen2}
Let $\ell$ be an $L$-Lipschitz margin loss, and let $k$ be an SPD kernel bounded by $K$. Let $P_+$ and $P_-$ be two probability distributions on $\sX$, and for $i=1,\ldots, N$, let $P_i$ be the distribution on $\sX \times \{-1,1\}$ having $P_+$ and $P_-$ as class-conditional distributions and $\pi_i$ as class prior. Set $\a_i = (\frac1{2\pi_i},\frac1{2(1-\pi_i)})$. For all $R > 2\varphi(0)/KL$, $w \in \Delta^N$, and $0 < \delta \le \frac14$, we have
with probability at least $1-\delta$ with respect to the draw of the corrupted samples $S_1, \ldots, S_N$,
\begin{equation*}
\sup_{f \in B_{k}(R)} \abs{
\sum_{i=1}^N w_i \left[ \frac1{n_i} \sum_{j=1}^{n} \ell_{\a_i}^{\rho_i}(f(X_{ij}),\Yt_{ij}) \right] -  \sE_{\pm}(f)} \\
\le C \sqrt{\sum_{i=1}^N \frac{w_i^2}{n_i} \left( \frac{1+|\rho_i^+ - \rho_i^-|}{1-\rho_i^- - \rho_i^+} \right)^2  \frac1{\min(\pi_i,1-\pi_i)^2}  },
\end{equation*}
where $C = 2 K R L \sqrt{\frac{\log (2/\delta)}{2}}$.
\end{thm}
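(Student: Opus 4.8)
The plan is to follow the template of the proof of Theorem~\ref{thm:gen}, isolating the two features specific to the cost-sensitive/BER setting and then reusing the same weighted Rademacher argument. The first step is a population identity. Since $\a_i^\pm \ge 0$ and $\ell \ge 0$, the cost-sensitive loss $\ell_{\a_i}$ is itself a legitimate loss mapping into $[0,\infty)$, so Lemma~\ref{lem:corrected} applies verbatim with $\ell$ replaced by $\ell_{\a_i}$ and $\ell^{\rho}$ by $\ell_{\a_i}^{\rho_i} = (\ell_{\a_i})^{\rho_i}$. Its boundedness hypothesis holds on $B_k(R)$ because $|f(x)| \le KR$ there and $\varphi$, being Lipschitz, is bounded on $[-KR,KR]$. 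Lemma~\ref{lem:corrected} then gives, for each $i$ and every $f \in B_k(R)$, $\mbe_{(X,\Yt)\sim P_i^{\rho_i}}[\ell_{\a_i}^{\rho_i}(f(X),\Yt)] = \mbe_{(X,Y)\sim P_i}[\ell_{\a_i}(f(X),Y)] = \sE_{P_i,\a_i}(f)$, and the choice $\a_i = (\frac1{2\pi_i},\frac1{2(1-\pi_i)})$ makes the right-hand side equal to $\sE_{\pm}(f)$ through the cost-sensitive/BER identity recorded just before the theorem. Because $w \in \Delta^N$ sums to one, I may write $\sE_{\pm}(f) = \sum_i w_i\,\mbe_{P_i^{\rho_i}}[\ell_{\a_i}^{\rho_i}(f(X),\Yt)]$, so the quantity inside the supremum is exactly a centered, weighted empirical process to which a concentration argument can be applied.

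The second step is to compute the Lipschitz constant of each corrected cost-sensitive loss, since this is what enters the bound. The maps $\ell_{\a_i}(\cdot,1) = \a_i^+\ell(\cdot,1)$ and $\ell_{\a_i}(\cdot,-1) = \a_i^-\ell(\cdot,-1)$ are $\a_i^+L$- and $\a_i^-L$-Lipschitz, and both coefficients satisfy $\a_i^\pm \le \frac1{2\min(\pi_i,1-\pi_i)}$. Because correction is the fixed linear map displayed before Lemma~\ref{lem:corrected}, $\ell_{\a_i}^{\rho_i}(\cdot,1)$ has Lipschitz constant at most $\frac{(1-\rho_i^-)\a_i^+L + \rho_i^+\a_i^-L}{1-\rho_i^--\rho_i^+}$, and symmetrically for $y=-1$; substituting the bound on $\a_i^\pm$ and maximizing over the two labels collapses the numerators to $1+|\rho_i^+-\rho_i^-|$, giving the Lipschitz constant
\begin{equation*}
\mu_i := \frac{L}{2\min(\pi_i,1-\pi_i)}\cdot\frac{1+|\rho_i^+-\rho_i^-|}{1-\rho_i^--\rho_i^+},
\end{equation*}
i.e. the Theorem~\ref{thm:gen} factor inflated by $\frac1{2\min(\pi_i,1-\pi_i)}$.

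With these two reductions the remainder is the weighted Rademacher argument underlying the master theorem of remark~(1), which I would carry out by symmetrization, contraction, and McDiarmid. After symmetrization (costing a factor $2$), Talagrand's contraction inequality replaces each $\ell_{\a_i}^{\rho_i}(f(X_{ij}),\cdot)$ by $\mu_i f(X_{ij})$; the reproducing property then turns the supremum over $B_k(R)$ into $R$ times an RKHS norm, and a Jensen/variance computation together with the kernel bound $\sqrt{k(x,x)} \le K$ bounds the resulting weighted Rademacher complexity by $KR\sqrt{\sum_i w_i^2\mu_i^2/n_i}$. A bounded-differences (McDiarmid) inequality controls the deviation of the weighted process from its mean, contributing the $\sqrt{\log(2/\delta)/2}$ factor, and the hypothesis $\delta \le \frac14$ lets me fold the mean term and the deviation term into the single radical in the statement (with $\log(1/\delta)$ promoted to $\log(2/\delta)$). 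Collecting constants gives a bound of the form $4KR\sqrt{\sum_i (w_i^2/n_i)\,\mu_i^2\,\log(2/\delta)/2}$; inserting $\mu_i$ and pulling the factor $L/2$ and the $\sqrt{\log(2/\delta)/2}$ out of the radical turns $4KR$ into $C = 2KRL\sqrt{\log(2/\delta)/2}$ and leaves precisely the stated radicand with its $1/\min(\pi_i,1-\pi_i)^2$ factors. I expect the main obstacle to be the Lipschitz computation of $\ell_{\a_i}^{\rho_i}$ --- verifying that bounding both $\a_i^\pm$ by $\frac1{2\min(\pi_i,1-\pi_i)}$ and maximizing over the label is what yields the clean numerator $1+|\rho_i^+-\rho_i^-|$ --- together with checking the range hypothesis behind the McDiarmid step, which is exactly where the mild condition $R > 2\varphi(0)/KL$ is used: it guarantees that on $B_k(R)$ the oscillation of each $\ell_{\a_i}^{\rho_i}$ is dominated by a clean multiple of $\mu_i KR$.
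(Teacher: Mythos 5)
Your proposal is correct and takes essentially the same route as the paper: the paper also proves this theorem by instantiating its weighted-Rademacher/McDiarmid master theorem with the cost-sensitive corrected losses $\ell_{\a_i}^{\rho_i}$, using the identity $\sE_{P_i,\a_i} = \sE_{\pm}$ and the same constants you derive (its Lemma \ref{lem:lossconstants} parts 3--4 combined with $\max(\a_i^+,\a_i^-) = \frac{1}{2\min(\pi_i,1-\pi_i)}$ give exactly your $\mu_i$ and the bound $|\ell_{\a_i}^{\rho_i}|_0 + |\ell_{\a_i}^{\rho_i}|KR \le 2KR\mu_i$). Your use of $R > 2\varphi(0)/KL$ to control the loss range in the bounded-differences step and of $\delta \le \frac14$ to fold the Rademacher mean term into the deviation term matches the paper's argument exactly.
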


Similar remarks apply to this result as were made in the previous section. The optimized weights are still proportional to an appropriately defined SNR, updated to include the class priors, where now there is a preference for more balanced samples, i.e., $\pi_i \approx \frac12$.

\section{Learning from Label Proportions}
\label{sec:llp}

In learning from label proportions, the learner is presented with several  pairs $(B_1,\g_1), \ldots, (B_M, \g_M)$. Each $B_i$ is a {\em bag} of $m_i$ unlabeled instances, and each $\g_i \in [0,1]$ is the proportion of instances from class 1 in the bag. The goal is to learn an accurate classifier as measured by some performance measure. Our approach is to frame LLP in the setting of Section \ref{sec:ber}, and so we adopt the BER as our performance measure. This choice is already a departure from prior work on LLP, which adopts a risk as the performance measure. The BER has the advantage of not being sensitive to changes in class prior at test time.

To reduce LLP to the setting of Section \ref{sec:ber}, we pair the bags, and view each pair of bags as a corrupted sample. Thus, we re-index the bags and let $(B_i^-,\g_i^-)$ and $(B_i^+,\g_i^+)$ constitute the $i$th pair of bags, and assume $\g_i^- < \g_i^+$. Also assume the total number of bags is $M=2N$, so that the number of pairs of bags is $N$.
We associate to every $X_{ij}$ in $B_i^-$ the noisy label $\Yt_{ij} = -1$, and to every $X_{ij}$ in $B_i^+$ the noisy label $\Yt_{ij}=1$. Finally, let $S_i$ denote the set of all pairs $(X_{ij},\Yt_{ij})$, $j=1,\ldots,n_i$ from the $i$th pair of bags.

\subsection{Probabilistic model for paired bags}

We propose a probabilistic model that connects LLP to the problem of learning from corrupted sources as described in Section \ref{sec:ber}. Let $P_+$ and $P_-$ be two probability distributions on $\sX$.

First, we define a joint distribution $Q_i$ on a realization $(X,\Yt)$ from the $i$th pair of bags as follows:  Assume $\Yt \sim \text{Ber}(\frac12)$, and that the conditional distribution of $X$ given $\Yt$ is
\begin{align*}
X &\sim \g_i^+ P_+ + (1-\g_i^+) P_- \qquad \text{given $\Yt = 1$}\\
X &\sim \g_i^- P_+ + (1-\g_i^-) P_- \qquad \text{given $\Yt = -1$}.
\end{align*}
The conditional distributions of $X | \Yt$ are consistent with $\g_i^+$ and $\g_i^-$ being the label proportions for the two bags, making this a reasonable model for LLP. The label proportions themselves are viewed as fixed and nonrandom\footnote{However, it is easy to extend our analysis to the setting where the bag proportions are also random, in which case the above model is conditioned on the $\gamma_i^{\pm}$. In this case, the joint distribution of bag proportions can be arbitrary as long as $\gamma_i^- < \gamma_i^+$ almost surely. The results are proved by first conditioning on, and the integrating out, the $\gamma_i^{\pm}$.}.

Next, we will construct a clean distribution $P_i$, and label noise proportions $\rho_i = (\rho_i^+, \rho_i^-)$, such that $Q_i = P_i^{\rho_i}$. These label {\em noise} proportions are not to be confused with the label proportions $\g_i^+, \g_i^-$ that annotate the bags, and in fact the former will be defined in terms of the latter. Let $P_i$ be the joint distribution of $(X,Y)$ whose class conditional distributions are $P_+$ and $P_-$, and whose prior class probability is
$$
\pi_i := \frac{\g_i^+ + \g_i^-}2.
$$
Also, define the label noise proportions $\rho_i = (\rho_i^+, \rho_i^-)$ according to
$$
\rho_i^+ := \frac{\g_i^-}{\g_i^- + \g_i+}, \qquad
\rho_i^- := \frac{1 - \g_i^+}{2 - \g_i^- - \g_i^+}.
$$
A simple calculation shows
\begin{equation}
\label{eqn:rhogam}
1-\rho_i^- - \rho_i^+ = \frac{\g_i^+ - \g_i^-}{(\g_i^+ + \g_i^-)(2 - \g_i^+ -\g_i^-)}
\end{equation}
which ensures that $\rho_i^- + \rho_i^+ < 1$, because $\g_i^+ - \g_i^- > 0$ by assumption.

We claim that $Q_i = P_i^{\rho_i}$. First,  consider the distribution of $\Yt$ under $P_i^{\rho_i}$. We have
\begin{equation*}
\Pr(\Yt = 1) = \pi_i (1-\rho_i^+) + (1-\pi_i) \rho_i^- \\
= \frac{\g_i^- + \g_i^+}2 \cdot \frac{\g_i^+}{\g_i^- + \g_i^+} + \frac{2 -\g_i^- - \g_i^+}2 \cdot \frac{1 - \g_i^+}{2 - \g_i^- - \g_i^+},
\end{equation*}
which equals $\frac12$ in agreement with $Q_i$. Next, consider the conditional distribution of $X$ given $\Yt$ under both $Q_i$ and $P_i^{\rho_i}$. To show that these agree, it suffices to show $\Pr(Y=1 \, | \, \Yt=1) = \g_i^+$ and $\Pr(Y=1 \, | \, \Yt=-1) = \g_i^-$. This follows from
\begin{equation*}
\Pr(Y=1 \, | \, \Yt = 1) = \frac{\Pr(\Yt = 1 \, | \, Y=1) \Pr(Y=1)}{\Pr(\Yt = 1)} \\
= \frac{(1-\rho_i^+)\frac{\g_i^- + \g_i^+}{2}}{\frac12}
= \frac{\g_i^+}{\g_i^- + \g_i^+} (\g_i^- + \g_i^+)
= \g_i^+,
\end{equation*}
and the other case is similar.

\subsection{A generalization error bound for LLP}

From the preceding discussion, we may view each pair of bags as a corrupted sample, and since each $P_i$ has the same class-conditional distributions, we are in the setting of Section \ref{sec:ber} and may deduce the following result. The result is actually obtained by applying the master theorem in the appendix, which allows us to save a factor of 3 in the leading constant compared with applying Theorem \ref{thm:gen2} directly.
\begin{thm}
\label{thm:gen4}
Let $\ell$ be an $L$-Lipschitz margin loss, and let $k$ be a SPD kernel bounded by $K$. Let $P_+$ and $P_-$ be two probability distributions on $\sX$. Let $S_1, \ldots, S_N$ be pairs of bags described by the construction and probabilistic model above. For each $i$, let (a) $\g_i^- < \g_i^+$ be the label proportions for the $i$th pair of bags, (b) $\pi_i = \frac{\g_i^+ + \g_i^-}2$, (c) $P_i$ be the distribution on $\sX \times \{-1,1\}$ with $P_+$ and $P_-$ as class-conditional distributions and $\pi_i$ as the class prior (d) $\a_i = (\frac1{\g_i^+ + \g_i^-},\frac1{2-\g_i^+ - \g_i^-})$, and (e) $\rho_i = (\frac{\g_i^-}{\g_i^- + \g_i^+}, \frac{1 - \g_i^+}{2 - \g_i^- - \g_i^+})$. For all $R > \varphi(0)/KL$, $w \in \Delta^N$, and $0 < \delta \le \frac14$, we have
with probability at least $1-\delta$ with respect to the draw of the corrupted samples $S_1, \ldots, S_N$,
\begin{equation*}
\sup_{f \in B_{k}(R)} \abs{
\sum_{i=1}^N w_i \left[ \frac1{n_i} \sum_{j=1}^{n_i} \ell_{\a_i}^{\rho_i}(f(X_{ij}),\Yt_{ij}) \right] -  \sE_{\pm}(f)} \\
\le 4 K R L \sqrt{\sum_{i=1}^N \frac{w_i^2}{n_i (\g_i^+ - \g_i^-)^2} \left(\frac{\log (2/\delta)}{2} \right)} .
\end{equation*}
\end{thm}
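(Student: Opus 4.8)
The plan is to exploit the fact that the probabilistic model of the preceding subsection already proves $Q_i = P_i^{\rho_i}$, so that each pair of bags is \emph{exactly} a corrupted sample in the sense of Section \ref{sec:ber}. I would then invoke the master theorem of the appendix directly (rather than Theorem \ref{thm:gen2}), since the master theorem controls the uniform deviation of a weighted sum of corrected empirical risks about a common population target, given only (i) that each corrected loss is unbiased for that target under its own corrupted distribution, and (ii) the Lipschitz constant of each corrected loss. Step (i) is quick: by the BER--cost-sensitive identity with $\a_i = (\frac1{2\pi_i},\frac1{2(1-\pi_i)})$ we have $\sE_{P_i,\a_i}(f) = \sE_{\pm}(f)$ for every $i$, and applying Lemma \ref{lem:corrected} to the loss $\ell_{\a_i}$ and distribution $P_i$ gives $\mbe_{(X,\Yt)\sim Q_i}[\ell_{\a_i}^{\rho_i}(f(X),\Yt)] = \sE_{P_i,\a_i}(f) = \sE_{\pm}(f)$. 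Since $w \in \Delta^N$ sums to one, the weighted population risk is again $\sE_{\pm}(f)$, confirming that $\sE_{\pm}$ is the common target.

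The crux is step (ii): the Lipschitz constant of $\ell_{\a_i}^{\rho_i}$. I would substitute the LLP values $\a_i = (\frac1{\g_i^++\g_i^-},\frac1{2-\g_i^+-\g_i^-})$ and $\rho_i = (\frac{\g_i^-}{\g_i^-+\g_i^+},\frac{1-\g_i^+}{2-\g_i^--\g_i^+})$ straight into the correction formula. Writing $\ell_{\a_i}(\cdot,1) = \a_i^+\ell(\cdot,1)$ and $\ell_{\a_i}(\cdot,-1) = \a_i^-\ell(\cdot,-1)$, the corrected loss $\ell_{\a_i}^{\rho_i}(\cdot,1)$ is a linear combination of $\ell(\cdot,1)$ and $\ell(\cdot,-1)$ whose absolute coefficients sum to $\frac{(1-\rho_i^-)\a_i^+ + \rho_i^+\a_i^-}{1-\rho_i^--\rho_i^+}$; since $\ell$ is $L$-Lipschitz this times $L$ bounds the Lipschitz constant of that branch. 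The key algebraic fact is the cancellation
\[
(1-\rho_i^-)\a_i^+ + \rho_i^+\a_i^- \;=\; \frac1{(\g_i^++\g_i^-)(2-\g_i^+-\g_i^-)},
\]
and the identical branch $(1-\rho_i^+)\a_i^- + \rho_i^-\a_i^+$ evaluates to the very same quantity, so both labels give the same constant. Dividing by $1-\rho_i^--\rho_i^+ = \frac{\g_i^+-\g_i^-}{(\g_i^++\g_i^-)(2-\g_i^+-\g_i^-)}$ from (\ref{eqn:rhogam}), the denominators cancel and the Lipschitz constant of $\ell_{\a_i}^{\rho_i}$ is simply $\frac{L}{\g_i^+-\g_i^-}$.

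Feeding $L_i = \frac{L}{\g_i^+-\g_i^-}$ into the master theorem, whose bound takes the form $4KR\sqrt{\sum_i \frac{w_i^2 L_i^2}{n_i}\cdot\frac{\log(2/\delta)}{2}}$ (cf. Theorem \ref{thm:gen}, where $L_i = L\frac{1+|\rho_i^+-\rho_i^-|}{1-\rho_i^--\rho_i^+}$), immediately produces the claimed inequality, and the requirement $R > \varphi(0)/KL$ is exactly the condition under which the master theorem guarantees the corrected risks are well-defined. I expect the Lipschitz computation to be the only genuine obstacle: securing the \emph{exact} constant $\frac{L}{\g_i^+-\g_i^-}$ via the cancellation above — rather than the looser product bound $\frac{1+|\rho_i^+-\rho_i^-|}{1-\rho_i^--\rho_i^+}\cdot\frac1{2\min(\pi_i,1-\pi_i)}$ that a direct appeal to Theorem \ref{thm:gen2} would impose — is precisely what saves the factor of $3$ in the leading constant, since that looser quantity can exceed $\frac1{\g_i^+-\g_i^-}$ by a factor approaching $3$ as $\g_i^+,\g_i^- \to 0$. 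Everything else is bookkeeping already handled in the appendix.
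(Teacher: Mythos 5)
Your route is the paper's route: reduce to the master theorem (Theorem \ref{thm:gen3}) rather than Theorem \ref{thm:gen2}, identify $\sE_{P_i,\a_i} = \sE_{\pm}$ for the LLP choice of $\a_i$, and obtain the exact per-sample constant via the cancellation $(1-\rho_i^-)\a_i^+ + \rho_i^+\a_i^- = (1-\rho_i^+)\a_i^- + \rho_i^-\a_i^+ = \bigl[(\g_i^++\g_i^-)(2-\g_i^+-\g_i^-)\bigr]^{-1}$ combined with \eqref{eqn:rhogam}. This is exactly part 6 of the paper's Lemma \ref{lem:lossconstants}, your algebra checks out, and your explanation of the factor-of-3 savings over invoking Theorem \ref{thm:gen2} directly matches the paper's own remark.

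The one genuine gap is in how you invoke the master theorem. Its bound is not the single radical $4KR\sqrt{\sum_i w_i^2 L_i^2 n_i^{-1}\log(2/\delta)/2}$ that you quote; it is a sum of two terms, $2KR\sqrt{\sum_i \frac{w_i^2}{n_i}|\ell_{\a_i}^{\rho_i}|^2}$ plus $\sqrt{\sum_i \frac{w_i^2}{n_i}\bigl(|\ell_{\a_i}^{\rho_i}|_0 + |\ell_{\a_i}^{\rho_i}|KR\bigr)^2\frac{\log(2/\delta)}{2}}$, where $|\ell|_0 := \max_y \ell(0,y)$. So besides the Lipschitz constant you must also bound $|\ell_{\a_i}^{\rho_i}|_0 \le \varphi(0)/(\g_i^+-\g_i^-)$ (part 5 of Lemma \ref{lem:lossconstants}); this in fact follows from the very cancellation you derived, since $\ell_{\a_i}^{\rho_i}(0,\pm 1)$ is a combination of $\varphi(0)$ with the same absolute coefficients, but your proposal never establishes it. The hypotheses then enter as follows: $R > \varphi(0)/KL$ gives $\varphi(0)\le LKR$, hence $|\ell_{\a_i}^{\rho_i}|_0 + |\ell_{\a_i}^{\rho_i}|KR \le 2KRL/(\g_i^+-\g_i^-)$, and $\delta \le \frac14$ gives $1 < \log(2/\delta)/2$, which lets the first (Rademacher) term be folded under the same radical; adding the two terms produces the constant $4$. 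In particular, $R > \varphi(0)/KL$ is not a well-definedness condition as you suggest --- the master theorem holds for every $R>0$ --- it is precisely the device that absorbs the $|\cdot|_0$ contribution into the factor $4KRL$, and without the missing part-5 bound and the $\delta \le \frac14$ step your final inequality does not yet follow.
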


\subsection{Optimal pairing of bags}
\label{sec:pair}

The bound above is minimized by selecting weights
$$
w_i \propto n_i (\g_i^+ - \g_i^-)^2,
$$
which gives preference to pairs of bags where one bag is mostly +1's (large $\g_i^+$) and the other is mostly -1's (small $\g_i^-$). With these weights, the bound becomes
$$
4 K R L \sqrt{\left( \sum_{i=1}^N n_i (\g_i^+ - \g_i^-)^2 \right)^{-1} \frac{\log (2/\delta)}{2} }.
$$

We can also optimize the pairing of bags by further optimizing the bound.
Consider the unpaired bags $(B_i, \g_i)$, $i=1,\ldots, 2N$, and assume all unpaired bags have the same size\footnote{We make this assumption because for our probabilistic model, bags in a pair have the same (expected) size.}. We would like to pair each bag to a different bag, forming pairs $(\g_i^+, \g_i^-)$, such that
$$
\sum_{i=1}^N (\g_i^+ - \g_i^-)^2
$$
is maximized. For each $i < j$, let $u_{ij}$ be a binary variable, with $u_{ij}=1$ indicating that the $i$th and $j$th bags are paired. The optimal pairing of bags is given by the solution to the following integer program:
\begin{align*}
\max_{u}  & \ \ \sum_{1 \le i < 2N} \sum_{i < j \le 2N} (\g_i - \g_j)^2 u_{ij} \\
 \text{s.t.} & \ \ u_{ij} \in \{0,1\}, \forall i,j \\
& \ \  \sum_{i < j} u_{ij} + \sum_{j < i} u_{ji} = 1, \forall i
\end{align*}
The equality constraint ensures that every bag is paired with precisely one other distinct bag. This problem is known as the ``maximum weighted (perfect) matching" problem, and in our experiments we solve it using the algorithm of \citet{edmonds1965maximum}.


\section{Discrimination Rules}
\label{sec:discrim}

The bounds above all show that a weighted empirical risk of the form
$$
\Ehat(f) = \sum_i \frac{w_i}{n_i} \sum_j (\ell_{\a_i})^{\rho_i} (f(X_{ij}),\Yt_{ij})
$$
is, with high probability, uniformly close to an error quantity $\sE(f)$ ($=\sE_P(f)$ or $\sE_{\pm}(f)$ depending on context) over a ball in a RKHS. This suggests a discrimination rule based on regularized, weighted empirical risk minimization (ERM):
\begin{equation}
\label{eqn:discrule}
\widehat{f}:= \argmin_{f \in \sH_k} \ \Ehat(f) + \lambda \| f \|_{\sH_k}^2,
\end{equation}
where $\lambda \ge 0$ and $\sH_k$ is the RKHS associated to $k$. In this section we discuss statistical and computational aspects of this learning rule.

\subsection{Consistency}

An algorithm is consistent if the sequence of random variables $\sE(\widehat{f})$ (where the randomness arises from the training data) converges asymptotically to $\sE^* := \sE(f^*)$, where $f^*$ is a global minimizer of $\sE(f)$.
To prove consistency using our generalization error bounds,
 two things must happen asymptotically: (1) 
$B_k(R)$ can approximate $f^*$ to arbitrary accuracy as $R \to \infty$, and (2) the generalization error bound tends to zero. (1) is ensured by taking $k$ to be a universal kernel on a compact domain \citep{steinwart08}. (2) Requires balancing the sample sizes with a shrinking value of $\lambda$, which implicitly determines $R$.
These ideas are captured in the following general result that covers all three settings discussed above. We say that the problems in Secs. \ref{sec:common}, \ref{sec:ber}, and \ref{sec:llp} satisfy the {\em bounded noise condition} if $1-\rho_i^+ - \rho_i^-, (1-\rho_i^+ - \rho_i^-)\min(\pi_i,1-\pi_i)$, and $\g_i^+ - \g_i^-$ are bounded away from zero, respectively.
\begin{thm}
\label{thm:llpconsist}
Let $\sX$ be compact and let $k$ be a universal kernel on $\sX$. Let $\lambda$ be such that $\lambda \to 0$ and $\lambda (n_1 + \cdots + n_N)/\log(n_1 + \cdots + n_N) \to \infty$ as $n_1 + \cdots + n_N \to \infty$. For each of the settings in Secs. \ref{sec:common}, \ref{sec:ber}, \ref{sec:llp}, under the bounded noise condition, and with optimized weights,
\begin{equation}
    \sE(\widehat{f}) \to \sE^*
\end{equation}
in probability as $n_1 + \cdots + n_N \to \infty$.
\end{thm}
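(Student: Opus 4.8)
The plan is to establish consistency of the regularized weighted ERM rule in \eqref{eqn:discrule} by the standard route for penalized $M$-estimation, splitting the excess risk $\sE(\widehat f)-\sE^*$ into an \emph{approximation} term, controlled by universality of $k$, and an \emph{estimation} term, controlled by the generalization bounds of Theorems \ref{thm:gen}, \ref{thm:gen2}, and \ref{thm:gen4}. The natural comparison object is the population regularized minimizer $f_\lambda := \argmin_{f \in \sH_k}\, \sE(f) + \lambda\|f\|_{\sH_k}^2$, whose suboptimality $A(\lambda) := \sE(f_\lambda) + \lambda\|f_\lambda\|_{\sH_k}^2 - \sE^*$ satisfies $A(\lambda)\to 0$ as $\lambda \to 0$: this is exactly the approximation property of a universal kernel on a compact domain, namely that $\inf_{f\in\sH_k}\sE(f) = \sE^*$ (density of $\sH_k$ in $C(\sX)$ together with Lipschitz continuity of the margin loss, cf. \citep{steinwart08}). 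Writing $\Ehat_\lambda(f) := \Ehat(f) + \lambda\|f\|_{\sH_k}^2$ and using optimality $\Ehat_\lambda(\widehat f)\le\Ehat_\lambda(f_\lambda)$, one obtains the oracle inequality
\[
\sE(\widehat f) - \sE^* \le \brac{\sE(\widehat f) - \Ehat(\widehat f)} + \brac{\Ehat(f_\lambda) - \sE(f_\lambda)} + A(\lambda) - \lambda\|\widehat f\|_{\sH_k}^2 .
\]

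The subtlety specific to this setting is that the corrected losses are \emph{not} nonnegative, so the usual trick of bounding $\|\widehat f\|_{\sH_k}$ by evaluating the objective at $f=0$ fails. First I would instead control the two norms that appear. Since $\sE \ge 0$ (because $\ell \ge 0$), evaluating the population objective at $0$ gives $\lambda\|f_\lambda\|_{\sH_k}^2 \le \sE(0) = \varphi(0)$, so $\|f_\lambda\|_{\sH_k} \le \sqrt{\varphi(0)/\lambda} =: R_\lambda$, a deterministic bound. For $\widehat f$ I would use a self-bounding argument. Each generalization bound has a right-hand side proportional to $R$, so under the bounded noise condition (which keeps the corrected-loss Lipschitz constants and the optimized effective sample size $\sum_i 1/c_i^2 \ge c\, n_{\mathrm{tot}}$ under control, where $n_{\mathrm{tot}} := n_1 + \cdots + n_N$) a peeling argument over dyadic radii yields, with probability $1-\delta$, a single event on which $\sup_{f \in B_k(R)}\abs{\Ehat(f) - \sE(f)} \le \beta R$ for all $R \ge R_\lambda$, with $\beta = O\paren{\sqrt{\log(2/\delta)/n_{\mathrm{tot}}}}$. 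Feeding $\abs{\Ehat(\widehat f) - \sE(\widehat f)} \le \beta\|\widehat f\|_{\sH_k}$ and $\sE(\widehat f)\ge 0$ back into $\Ehat_\lambda(\widehat f)\le\Ehat_\lambda(f_\lambda)$ produces a quadratic inequality in $\|\widehat f\|_{\sH_k}$ whose solution gives $\|\widehat f\|_{\sH_k} = O(1/\sqrt\lambda)$ precisely when $\beta = O(\sqrt\lambda)$, i.e. when $\lambda\, n_{\mathrm{tot}}/\log(2/\delta)$ stays bounded away from zero.

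With both norms of order $1/\sqrt\lambda$, I would bound the two bracketed deviations in the oracle inequality by $\beta\|\widehat f\|_{\sH_k}$ and $\beta\|f_\lambda\|_{\sH_k}$, so that
\[
\sE(\widehat f) - \sE^* \le \beta\paren{\|\widehat f\|_{\sH_k} + \|f_\lambda\|_{\sH_k}} + A(\lambda) = O\paren{\sqrt{\tfrac{\log(2/\delta)}{\lambda\, n_{\mathrm{tot}}}}} + A(\lambda).
\]
Choosing $\delta = \delta_n \to 0$ slowly enough that $\log(2/\delta_n) \asymp \log n_{\mathrm{tot}}$ makes the first term vanish exactly under the hypothesis $\lambda\, n_{\mathrm{tot}}/\log n_{\mathrm{tot}} \to \infty$ (this is where the logarithmic factor enters), while $A(\lambda)\to 0$ because $\lambda \to 0$; taking $\delta_n \to 0$ upgrades the high-probability bound to convergence in probability. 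Since the three settings of Secs. \ref{sec:common}, \ref{sec:ber}, \ref{sec:llp} share the same optimized-bound form and the same reduction of their effective sample sizes to $c\, n_{\mathrm{tot}}$ under their respective bounded noise conditions, the argument is uniform across all three.

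I expect the main obstacle to be the norm control of $\widehat f$: because the corrected losses can be negative, one cannot bound $\|\widehat f\|_{\sH_k}$ deterministically as in the hinge-loss/SVM analysis, and the self-bounding step must be coupled with a deviation bound valid simultaneously over all radii $R$ (hence the peeling), which is what pins down the $\log n_{\mathrm{tot}}$ in the rate condition. The approximation ingredient $A(\lambda)\to 0$ is comparatively routine given universality, and the verification that the optimized effective sample sizes are $\Theta(n_{\mathrm{tot}})$ under each bounded noise condition is a direct computation.
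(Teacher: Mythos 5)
Your proposal is correct and shares the paper's overall skeleton: a comparator supplied by universality of $k$ on the compact $\sX$, the radius $R \asymp \sqrt{\varphi(0)/\lambda}$, the choice $\delta \asymp 1/(n_1+\cdots+n_N)$, and the hypothesis $\lambda(n_1+\cdots+n_N)/\log(n_1+\cdots+n_N)\to\infty$ entering exactly where you place it; the paper's use of a fixed $\epsilon$-approximant $f_\epsilon$ in place of your regularized population minimizer $f_\lambda$ is an immaterial difference. The genuine divergence is the step you single out as the main obstacle: control of $\|\widehat{f}\|_{\sH_k}$. The paper handles it in one line, arguing $J(\widehat{f}) \le J(0) = \varphi(0)$ and then $\lambda\|\widehat{f}\|^2_{\sH_k} \le \varphi(0) - \Ehat(\widehat{f}) \le \varphi(0)$, which implicitly asserts $\Ehat(\widehat{f}) \ge 0$. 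As you observe, this assertion is not justified for corrected losses: with the logistic loss and $\rho^+>0$, $\ell^{\rho}(t,1)\to-\infty$ as $t\to\infty$, so the empirical corrected risk of the global minimizer can be negative; without further argument one only gets the coercivity bound $\|\widehat{f}\|_{\sH_k} = O(1/\lambda)$, and feeding that radius into the generalization bound would require the stronger condition $\lambda^2(n_1+\cdots+n_N)/\log(n_1+\cdots+n_N)\to\infty$. Your peeling/self-bounding argument repairs exactly this point: on the uniform-over-radii event, $\Ehat(\widehat{f}) \ge \sE(\widehat{f}) - \beta\|\widehat{f}\|_{\sH_k} \ge -\beta\|\widehat{f}\|_{\sH_k}$ since the clean risk is nonnegative, and the resulting quadratic inequality recovers $\|\widehat{f}\|_{\sH_k} = O(1/\sqrt{\lambda})$ under the stated rate hypothesis. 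So your route is longer but rigorous where the paper's own proof takes the key inequality for granted; the only detail you should make explicit is that the deterministic coercivity bound $\|\widehat{f}\|_{\sH_k} = O(1/\lambda)$ (from the Lipschitz lower bound on $\Ehat$) confines the peeling to $O(\log(1/\lambda))$ dyadic levels, so the union-bound inflation of $\beta$ is a harmless iterated-logarithm factor rather than the exactly uniform $\beta$ you state.
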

It is important to note that the condition $n_1 + \cdots + n_N \to \infty$ covers a wide range of asymptotic regimes. It essentially requires that at least one of $n_1, \ldots, n_N$ or $N$ tend to infinity. In particular, consistency is possible even though some sample sizes may remain bounded.

This consistency result is universal in the sense that it makes no assumptions on $P_-, P_+$ other than compact support. In the case of LLP, the only prior work to consider consistency makes much stronger distributional assumptions \citep{yu15tr}.

\subsection{Calibrated Losses}

The consistency discussion above applies to the loss function that is used in the algorithm. This loss is typically chosen for computational properties such as convexity, whereas the true loss of interest may be the 0-1 loss. Fortunately, we can ensure consistency wrt the 0-1 loss simply by choosing $\ell$ to be classification calibrated which, in the case of a margin loss $\varphi$, requires that it be differentiable at 0 and $\varphi'(0) < 0$ \citep{bartlett06}. This decoupling of calibration and noise correction is one nice feature of the method of \citet{natarajan18jmlr}.


\subsection{Convexity}

We say that the loss $\ell$ is {\em convex} if, for each $y$, $\ell(t,y)$ is a convex function of $t$. \citet{natarajan18jmlr} establish the following result, where $\ell''$ denotes the second derivative of $\ell$ with respect to its first variable.
\begin{prop}
\label{prop:conv1}
Let $\ell$ be a convex, twice differentiable loss satisfying
\begin{equation}
\label{eqn:second}
\ell''(t,1) = \ell''(t,-1)
\end{equation}
for all $t \in \R$. Then for any $\rho = (\rho^+,\rho^-) \in [0,1]^2$ satisfying $\rho^+ + \rho^- < 1$, $\ell^\rho$ is convex.
\end{prop}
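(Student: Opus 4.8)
The plan is to exploit the fact that convexity of a twice-differentiable function of one variable is equivalent to nonnegativity of its second derivative, together with the observation that $\ell^\rho(\cdot,y)$ is, for each fixed $y$, a fixed linear combination of $\ell(\cdot,1)$ and $\ell(\cdot,-1)$. Since $\ell$ is twice differentiable, so is each $\ell^\rho(\cdot,y)$, and I may differentiate the defining formulas for $\ell^\rho$ twice, passing the derivatives through the linear combination.

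Concretely, I would first compute, for the label $y=1$,
$$
(\ell^\rho)''(t,1) = \frac{(1-\rho^-)\,\ell''(t,1) - \rho^+\,\ell''(t,-1)}{1-\rho^- - \rho^+},
$$
and analogously for $y=-1$. The next step is to invoke the hypothesis $\ell''(t,1) = \ell''(t,-1)$, writing this common value as $g(t)$. Substituting collapses the numerator: the coefficients $(1-\rho^-)$ and $-\rho^+$ combine to $(1-\rho^- - \rho^+)$, which cancels the denominator, leaving $(\ell^\rho)''(t,1) = g(t)$. The identical computation for $y=-1$, where the numerator coefficients are $(1-\rho^+)$ and $-\rho^-$, again yields $(\ell^\rho)''(t,-1) = g(t)$. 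Finally, since $\ell$ is convex and twice differentiable, $g(t) = \ell''(t,1) \ge 0$ for all $t$, so both corrected losses have nonnegative second derivative and are therefore convex in $t$, which is the claim.

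The computation is essentially mechanical, so there is no serious obstacle; the one point that requires care is the sign of the denominator. The cancellation preserves nonnegativity only because $1-\rho^- - \rho^+ > 0$, which is exactly the standing assumption $\rho^+ + \rho^- < 1$. Were this quantity negative, the nonnegative numerator would produce a nonpositive (hence concave) result, so this hypothesis is what makes the argument go through. It is also worth noting that the conclusion is in fact slightly stronger than mere convexity: under \eqref{eqn:second} the corrected loss has the \emph{same} second derivative as the original loss, independent of $\rho$.
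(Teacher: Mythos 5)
Your proof is correct and is exactly the argument the paper has in mind: the paper offers no detailed proof, remarking only that the result ``follows easily from the definition of the corrected loss,'' and your second-derivative computation (with the cancellation of $1-\rho^--\rho^+$ and the attention to its sign) is precisely that easy argument made explicit, matching the technique the paper uses in its appendix proof of Proposition~\ref{prop:conv2}. Your closing observation that $(\ell^\rho)''(\cdot,y)=\ell''(\cdot,y)$ independently of $\rho$ is a nice strengthening, and it is consistent with everything in the paper.
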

This result follows easily from the definition of the corrected loss. Examples of losses satisfying the second order condition include the logistic, Huber, and squared error losses.

This result immediately implies that the weighted average of losses in Section \ref{sec:common}, $\sum_i \frac{w_i}{n_i} \sum_j \ell^{\rho_i}(t,\Yt_{ij})$ is a convex function of $t$, provided $\ell$ satisfies \eqref{eqn:second}. The result is less helpful in the case of Section \ref{sec:ber}, because cost-sensitive losses $\ell_\a$ do not generally satisfy \eqref{eqn:second}. In the setting of LLP, however, the choices of $\rho_i$ and $\a_i$ play together nicely and allow us to establish the following.

\begin{prop}
\label{prop:conv2}
Let $\ell$ be a convex, twice differentiable loss satisfying
\begin{equation}
\label{eqn:second2}
\ell''(t,1) = \ell''(t,-1)
\end{equation}
for all $t \in \R$. For $1 \le i \le N$, let $\g_i^+, \g_i^-, \a_i^+, \a_i^-, \rho_i^+$ and $\rho_i^-$ be as in the statement of Theorem \ref{thm:gen4}, and let $n_i^{\pm} := |\{j : \Yt_{ij} = \pm 1\}|$. Then
$$
J(t):=\sum_i \frac{w_i}{n_i} \sum_j (\ell_{\a_i})^{\rho_i} (t,\Yt_{ij})
$$
is a convex function of $t$ provided
\begin{equation}
\label{eqn:convsuff}
\sum_i \frac{w_i}{n_i(\g_i^+ - \g_i^-)} \Bigg[ n_i^+ \Bigg(\frac12 - \g_i^-\Bigg) + n_i^- \Bigg(\g_i^+ - \frac12\Bigg) \Bigg] \ge 0.
\end{equation}
If, in addition, $\ell''(t,y) > 0$ for all $t,y$, then \eqref{eqn:convsuff} is also necessary for convexity of $J(t)$.
\end{prop}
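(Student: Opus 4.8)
The plan is to verify convexity directly through the sign of $J''(t)$, which is available since $\ell$ is twice differentiable. First I would write the corrected cost-sensitive loss explicitly. Because $\ell_{\a_i}(t,1) = \a_i^+ \ell(t,1)$ and $\ell_{\a_i}(t,-1) = \a_i^- \ell(t,-1)$, applying the correction formula from Lemma~\ref{lem:corrected}'s preceding display gives
\[
(\ell_{\a_i})^{\rho_i}(t,1) = \frac{(1-\rho_i^-)\a_i^+ \ell(t,1) - \rho_i^+ \a_i^- \ell(t,-1)}{1-\rho_i^- - \rho_i^+},
\]
together with the analogous expression for $(\ell_{\a_i})^{\rho_i}(t,-1)$. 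Differentiating twice in $t$ and invoking the second-order condition \eqref{eqn:second2}, $\ell''(t,1)=\ell''(t,-1)$, lets me factor out the common value $\ell''(t)$, so that $\frac{d^2}{dt^2}(\ell_{\a_i})^{\rho_i}(t,1) = \frac{(1-\rho_i^-)\a_i^+ - \rho_i^+\a_i^-}{1-\rho_i^- - \rho_i^+}\,\ell''(t)$, with a parallel coefficient for label $-1$.

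Next I would substitute the LLP-specific values of $\a_i$ and $\rho_i$ from Theorem~\ref{thm:gen4} and simplify. Using $1-\rho_i^- = \frac{1-\g_i^-}{2-\g_i^- - \g_i^+}$ and $1-\rho_i^+ = \frac{\g_i^+}{\g_i^- + \g_i^+}$, the two numerators collapse to $\frac{2(\frac12 - \g_i^-)}{(2-\g_i^- - \g_i^+)(\g_i^+ + \g_i^-)}$ and $\frac{2(\g_i^+ - \frac12)}{(\g_i^- + \g_i^+)(2-\g_i^+ - \g_i^-)}$, respectively. Dividing by $1-\rho_i^- - \rho_i^+$ and using identity \eqref{eqn:rhogam}, namely $1-\rho_i^- - \rho_i^+ = \frac{\g_i^+ - \g_i^-}{(\g_i^+ + \g_i^-)(2-\g_i^+ - \g_i^-)}$, the factors $(\g_i^+ + \g_i^-)(2-\g_i^+-\g_i^-)$ cancel, leaving the clean coefficients $\frac{2(\frac12 - \g_i^-)}{\g_i^+ - \g_i^-}$ and $\frac{2(\g_i^+ - \frac12)}{\g_i^+ - \g_i^-}$. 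This cancellation, which is precisely where the choices of $\rho_i$ and $\a_i$ play together, is the main obstacle: the bookkeeping across the two labels is delicate, and the whole result hinges on both coefficients emerging proportional to $1/(\g_i^+ - \g_i^-)$.

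Finally I would assemble $J''$. Splitting the inner sum according to $\Yt_{ij} = \pm 1$ gives $\sum_j (\ell_{\a_i})^{\rho_i}(t,\Yt_{ij}) = n_i^+ (\ell_{\a_i})^{\rho_i}(t,1) + n_i^- (\ell_{\a_i})^{\rho_i}(t,-1)$, so that
\[
J''(t) = 2\,\ell''(t) \sum_i \frac{w_i}{n_i(\g_i^+ - \g_i^-)}\left[ n_i^+\left(\frac12 - \g_i^-\right) + n_i^-\left(\g_i^+ - \frac12\right)\right].
\]
Since $\ell$ is convex, $\ell''(t)\ge 0$ for all $t$; hence whenever the bracketed sum $S$ is nonnegative we obtain $J''(t)\ge 0$, so $J$ is convex. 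As $S$ is exactly the left-hand side of \eqref{eqn:convsuff}, this establishes sufficiency. For necessity, if in addition $\ell''(t,y)>0$ everywhere, then $\ell''(t)>0$, so $J''(t) = 2\ell''(t)\,S$ carries the sign of $S$; convexity forces $J''(t)\ge 0$ for all $t$, which can hold only if $S\ge 0$. (Without strict positivity, $\ell''$ could vanish identically, making $J$ affine regardless of $S$, which is why the extra hypothesis is needed for the converse.)
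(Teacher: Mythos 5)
Your proposal is correct and follows essentially the same route as the paper's proof: both reduce $(\ell_{\a_i})^{\rho_i}$ to coefficients proportional to $(1-2\g_i^-)/(\g_i^+-\g_i^-)$ and $(2\g_i^+-1)/(\g_i^+-\g_i^-)$ via the identity \eqref{eqn:rhogam}, then read off the sign of $J''(t) = 2\ell''(t)\,S$ using \eqref{eqn:second2}. The only cosmetic difference is that the paper simplifies the corrected losses in closed form before differentiating, whereas you differentiate first and then substitute the LLP-specific $\a_i,\rho_i$; your explicit treatment of the necessity direction matches what the paper leaves implicit in ``the result now follows.''
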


The condition in \eqref{eqn:convsuff} is satisfied if either (a) for all $i$, $\g_i^- \le \frac12$ and $\g_i^+ \ge \frac12$, or (b) for all $i$, $n_i^+ = n_i^-$. The condition in \eqref{eqn:convsuff} may still be satisfied even if neither (a) nor (b) are, and can be easily verified numerically for any instance of the problem.

\subsection{LLP Implementation}

We employ the logistic loss and use a Gaussian kernel.
To compute the minimizer of \eqref{eqn:discrule}, we appeal to the representer theorem \citep{scholkopf01} which implies that the solution has the form
$    f(x) = \sum_{i} \alpha_i k(x, x_i)$.
 In our LLP experiments, all (unpaired and therefore paired) bags have equal size. Prop. \ref{prop:conv2} thus implies that the objective is convex in the $\alpha_i$, which are learned via gradient descent. We also perform the optimal pairing of bags as described in Sec. \ref{sec:pair}, and use the weights $w_i \propto (\g_i^+ - \g_i^-)^2$ {\em as prescribed by our theory}. It is possible that parametrizing the weights in terms of $(\g_i^+ - \g_i^-)$, and tuning them by cross-validation, may lead to improved performance. If a bag pair has a weight of zero, i.e., $\gamma_i^+ = \gamma_i^-$, we drop the pair from the training data. Finally, we remark that the $\g_i$'s are fixed population level parameters in our model, whereas they are typically empirical quantities (computed from true but unobserved class labels) in practice. We take the latter to be estimates of the former.



\section{LLP EXPERIMENTS}
\label{sec:exp}

To illustrate our theory, we compare our approach against InvCal \citep{rueping10} and alternate-$\propto$SVM \citep{yu13}, which are the two most common reference methods in learning with label proportions.

\subsection{Datasets}
We employ datasets from the UCI repository (see table). For datasets with multiple classes, we convert them to binary by choosing one class as positive and the others are merged to form the negative class.

\begin{center}
\resizebox{0.5\textwidth}{!}{%
 \begin{tabular}{c || c || c || c}
 \hline
 Dataset & Size & Classes & Prior label distribution \\ [0.5ex]
 \hline\hline
 Banknote & 1372 & 2 & (55.54\%, 44.46\%) \\
 \hline
 Cardiotocography & 2126 & 3 & (22.15\%, 77.85\%) \\
 \hline
 CNAE & 1080 & 9 & (11.11\%, 88.89\%) \\
 \hline
 Contraceptive & 1473 & 3 & (42.70\%, 57.30\%) \\
 \hline
 German bank & 1000 & 2 & (30.00\%, 70.00\%) \\
 \hline
 Sports article & 1000 & 2 & (63.50\%, 36.50\%) \\
 \hline
 Wireless Indoor & 2000 & 4 & (25.00\%, 75.00\%) \\
 \hline
\end{tabular}
}
\end{center}

\subsection{Experiment Setup}
We first randomly split each dataset into training and test sets, using 80\% for training. The training set is then split into bags of fixed sizes. We consider the bag sizes of 2, 4, 8, 16, 32, 64, and select the parameters of each model by 5-fold cross validation. As we assume instance-level labels are inaccessible during training, the parameters of our method are selected by the corrected empirical risk, while the other two methods use the empirical proportion risk. The above process is repeated for 5 times per dataset per bag size. The mean accuracy and the mean balanced accuracy are reported with their standard deviations.

For InvCal, the parameters are tuned from $C_p \in \{0.1, 1, 10 \}$, and $\epsilon \in \{0, 0.01, 0.1\} $. For alter-$\propto$SVM, the parameters are tuned from $C \in \{0.1, 1, 10\}, C_p \in \{1, 10, 100\}.$ Following \citep{yu13}, The alternate $\propto$SVM is randomly initialized 10 times and the result with the smallest objective values is picked. In our approach, the parameters are tuned from learning rate $\in \{ 0.1, 0.01, 0.001 \}$, decay $\in \{0.01, 0.001, 0.0001 \}$, iterations $\in \{100, 800, 1600, 3200\}$, and regularization parameter $\in \{0.001, 0\}$.
For all approaches, the Gaussian kernel is used and the parameter $\gamma$ is tuned from \{0.001, 0.1, 1\}

\subsection{Results}
Performance is measured by balanced accuracy (1 - BER) with 0/1 loss, which is shown in Table 1 in the appendix. As a reference, the conventional accuracy is shown in Table 2. As shown by Table 1, corrected loss outperforms the other two methods 16 times, while for InvCal the number is 13 and for $\propto$SVM 15. We can conclude that our method is comparable to the other two methods for balanced accuracy.

\section{DISCUSSION}
\label{sec:disc}

We have presented one approach to learning from multiple corrupted sources, building on the method of unbiased losses  \citep{natarajan18jmlr}. At least two other approaches may be worth exploring. First, \citet{natarajan18jmlr} present a second general approach, the method of calibrated surrogates. With this method, it is no longer possible to decouple  calibration and noise correction, and different techniques would be needed. Second, one could use {\em symmetric} losses, which are immune to class-conditional label noise \citep{rooyen15tr,charoenphakdee19icml}.

We also point out that while our approach has focused on kernel methods for binary classification, it can be generalized. For example, unbiased losses for multiclass have be developed in recent years \citep{patrini17dnn, rooyen18jmlr}. Furthermore, our analysis is based on Rademacher complexity and should be extensible to other types of classifiers \citep{neyshabur15colt}.




\bibliographystyle{plainnat}

\bibliography{myBib}

\begin{thebibliography}{30}
\providecommand{\natexlab}[1]{#1}
\providecommand{\url}[1]{\texttt{#1}}
\expandafter\ifx\csname urlstyle\endcsname\relax
  \providecommand{\doi}[1]{doi: #1}\else
  \providecommand{\doi}{doi: \begingroup \urlstyle{rm}\Url}\fi

\bibitem[Bartlett et~al.(2006)Bartlett, Jordan, and McAuliffe]{bartlett06}
P.~Bartlett, M.~Jordan, and J.~McAuliffe.
\newblock Convexity, classification, and risk bounds.
\newblock \emph{J. American Statistical Association}, 101\penalty0
  (473):\penalty0 138--156, 2006.

\bibitem[Blanchard et~al.(2016)Blanchard, Flaska, Handy, Pozzi, and
  Scott]{blanchard16ejs}
G.~Blanchard, M.~Flaska, G.~Handy, S.~Pozzi, and C.~Scott.
\newblock Classification with asymmetric label noise: Consistency and maximal
  denoising.
\newblock \emph{Electronic Journal of Statistics}, 10:\penalty0 2780--2824,
  2016.

\bibitem[Cannings et~al.(2018)Cannings, Fan, and Samworth]{cannings18}
Timothy~I. Cannings, Yingying Fan, and Richard~J. Samworth.
\newblock Classification with imperfect training labels.
\newblock Technical Report arXiv:1805.11505, 2018.

\bibitem[Charoenphakdee et~al.(2019)Charoenphakdee, Lee, and
  Sugiyama]{charoenphakdee19icml}
Nontawat Charoenphakdee, Jongyeong Lee, and Masashi Sugiyama.
\newblock On symmetric losses for learning from corrupted labels.
\newblock In Kamalika Chaudhuri and Ruslan Salakhutdinov, editors,
  \emph{Proceedings of the 36th International Conference on Machine Learning},
  volume~97 of \emph{Proceedings of Machine Learning Research}, pages 961--970,
  2019.

\bibitem[Ding et~al.(2017)Ding, Li, and Yu]{ding2017}
Yongke Ding, Yuanxiang Li, and Wenxian Yu.
\newblock Learning from label proportions for sar image classification.
\newblock \emph{EURASIP Journal on Advances in Signal Processing}, 2017.

\bibitem[Dulac-Arnold et~al.(2019)Dulac-Arnold, Zeghidour, Cuturi, Beyer, and
  Vert]{arnold19tr}
Gabriel Dulac-Arnold, Neil Zeghidour, Marco Cuturi, Lucas Beyer, and
  Jean-Philippe Vert.
\newblock Deep multi-class learning from label proportions.
\newblock Technical Report arXiv:1905.12909, 2019.

\bibitem[Edmonds(1965)]{edmonds1965maximum}
Jack Edmonds.
\newblock Maximum matching and a polyhedron with 0, 1-vertices.
\newblock \emph{Journal of Research of the National Bureau of Standards B},
  69:\penalty0 125--130, 1965.

\bibitem[Kamalaruban and Williamson(2018)]{kamalaruban18tr}
Parameswaran Kamalaruban and Robert~C. Williamson.
\newblock Minimax lower bounds for cost sensitive classification.
\newblock Technical Report arXiv:1805.07723, 2018.

\bibitem[Komiske et~al.(2018)Komiske, Metodiev, Nachman, and
  Schwartz]{komiske18}
Patrick Komiske, Eric Metodiev, Benjamin Nachman, and Matthew Schwartz.
\newblock Learning to classify from impure samples with high-dimensional data.
\newblock \emph{Physical Review D}, 98, 07 2018.

\bibitem[McDiarmid(1989)]{mcdiarmid}
C.~McDiarmid.
\newblock On the method of bounded differences.
\newblock \emph{Surveys in Combinatorics}, 141:\penalty0 148--188, 1989.

\bibitem[Meir and Zhang(2003)]{meir03jmlr}
R.~Meir and T.~Zhang.
\newblock Generalization error bounds for bayesian mixture algorithms.
\newblock \emph{Journal of Machine Learning Research}, 4:\penalty0 839--860,
  2003.

\bibitem[Menon et~al.(2018)Menon, van Rooyen, and Natarajan]{menon18}
Aditya~Krishna Menon, Brendan van Rooyen, and Nagarajan Natarajan.
\newblock Learning from binary labels with instance-dependent noise.
\newblock \emph{Machine Learning}, 107:\penalty0 1561--1595, 2018.

\bibitem[Mohri et~al.(2012)Mohri, Rostamizadeh, and Talwalkar]{mohri12}
M.~Mohri, A.~Rostamizadeh, and A.~Talwalkar.
\newblock \emph{Foundations of Machine Learning}.
\newblock MIT Press, 2012.

\bibitem[Natarajan et~al.(2018)Natarajan, Dhillon, Ravikumar, and
  Tewari]{natarajan18jmlr}
Nagarajan Natarajan, Inderjit~S. Dhillon, Pradeep Ravikumar, and Ambuj Tewari.
\newblock Cost-sensitive learning with noisy labels.
\newblock \emph{Journal of Machine Learning Research}, 18\penalty0
  (155):\penalty0 1--33, 2018.
\newblock URL \url{http://jmlr.org/papers/v18/15-226.html}.

\bibitem[Neyshabur et~al.(2015)Neyshabur, Tomioka, and Srebro]{neyshabur15colt}
Behnam Neyshabur, Ryota Tomioka, and Nathan Srebro.
\newblock Norm-based capacity control in neural networks.
\newblock In Peter Grünwald, Elad Hazan, and Satyen Kale, editors,
  \emph{Proceedings of The 28th Conference on Learning Theory}, volume~40 of
  \emph{Proceedings of Machine Learning Research}, pages 1376--1401, 2015.

\bibitem[Pan and Yang(2010)]{pan:10:kde}
S.~J. Pan and Q.~Yang.
\newblock A survey on transfer learning.
\newblock \emph{IEEE Transactions on Knowledge and Data Engineering},
  22:\penalty0 1345--1359, 2010.

\bibitem[Patrini et~al.(2014)Patrini, Nock, Rivera, and Caetano]{nolabelnocry}
Giorgio Patrini, Richard Nock, Paul Rivera, and Tiberio Caetano.
\newblock ({A}lmost) {N}o label no cry.
\newblock In Z.~Ghahramani, M.~Welling, C.~Cortes, N.~D. Lawrence, and K.~Q.
  Weinberger, editors, \emph{Advances in Neural Information Processing Systems
  27}, pages 190--198. 2014.

\bibitem[Patrini et~al.(2017)Patrini, Rozza, Krishna~Menon, Nock, and
  Qu]{patrini17dnn}
Giorgio Patrini, Alessandro Rozza, Aditya Krishna~Menon, Richard Nock, and
  Lizhen Qu.
\newblock Making deep neural networks robust to label noise: A loss correction
  approach.
\newblock In \emph{The IEEE Conference on Computer Vision and Pattern
  Recognition (CVPR)}, July 2017.

\bibitem[Quadrianto et~al.(2009)Quadrianto, Smola, Caetano, and
  Le]{quadrianto09jmlr}
Novi Quadrianto, Alex~J. Smola, Tib\'{e}rio~S. Caetano, and Quoc~V. Le.
\newblock Estimating labels from label proportions.
\newblock \emph{Journal of Machine Learning Research}, 10:\penalty0 2349--2374,
  2009.

\bibitem[Ratner et~al.(2016)Ratner, De~Sa, Wu, Selsam, and
  R\'{e}]{ratner16nips}
Alexander~J Ratner, Christopher~M De~Sa, Sen Wu, Daniel Selsam, and Christopher
  R\'{e}.
\newblock Data programming: Creating large training sets, quickly.
\newblock In D.~D. Lee, M.~Sugiyama, U.~V. Luxburg, I.~Guyon, and R.~Garnett,
  editors, \emph{Advances in Neural Information Processing Systems 29}, pages
  3567--3575. 2016.

\bibitem[Rueping(2010)]{rueping10}
Stefan Rueping.
\newblock {SVM} classifier estimation from group probabilities.
\newblock In \emph{Proceedings of the 27th International Conference on
  International Conference on Machine Learning}, pages 911--918, 2010.

\bibitem[Sch\"{o}lkopf et~al.(2001)Sch\"{o}lkopf, Herbrich, and
  Smola]{scholkopf01}
B.~Sch\"{o}lkopf, R.~Herbrich, and A.~J. Smola.
\newblock A generalized representer theorem.
\newblock In \emph{Proc. Annu. Conf. Comput. Learning Theory}, pages 416--426,
  2001.

\bibitem[Scott(2019)]{scott19alt}
Clayton Scott.
\newblock A generalized {N}eyman-{P}earson criterion for optimal domain
  adaptation.
\newblock In Aur\'elien Garivier and Satyen Kale, editors, \emph{Proceedings of
  the 30th International Conference on Algorithmic Learning Theory}, volume~98
  of \emph{Proceedings of Machine Learning Research}, pages 738--761. PMLR,
  2019.

\bibitem[Steinwart and Christmann(2008)]{steinwart08}
I.~Steinwart and A.~Christmann.
\newblock \emph{Support Vector Machines}.
\newblock Springer, 2008.

\bibitem[{Sun} et~al.(2017){Sun}, {Sheldon}, and {O’Connor}]{sun17}
T.~{Sun}, D.~{Sheldon}, and B.~{O’Connor}.
\newblock A probabilistic approach for learning with label proportions applied
  to the us presidential election.
\newblock In \emph{2017 IEEE International Conference on Data Mining (ICDM)},
  pages 445--454, 2017.

\bibitem[van Rooyen and Williamson(2018)]{rooyen18jmlr}
Brendan van Rooyen and Robert~C. Williamson.
\newblock A theory of learning with corrupted labels.
\newblock \emph{Journal of Machine Learning Research}, 18\penalty0
  (228):\penalty0 1--50, 2018.

\bibitem[van Rooyen et~al.(2015)van Rooyen, Menon, and Williamson]{rooyen15tr}
Brendan van Rooyen, Aditya~Krishna Menon, and Robert~C. Williamson.
\newblock An average classification algorithm.
\newblock Technical Report arXiv:1506.01520, 2015.

\bibitem[Yu et~al.(2013)Yu, Liu, Kumar, Jebara, and Chang]{yu13}
Felix~X. Yu, Dong Liu, Sanjiv Kumar, Tony Jebara, and Shih-Fu Chang.
\newblock $\propto${SVM} for learning with label proportions.
\newblock In \emph{Proceedings of the 30th International Conference on
  International Conference on Machine Learning - Volume 28}, 2013.

\bibitem[Yu et~al.(2015)Yu, Choromanski, Kumar, Jebara, and Chang]{yu15tr}
Felix~X. Yu, Krzysztof Choromanski, Sanjiv Kumar, Tony Jebara, and Shih-Fu
  Chang.
\newblock On learning from label proportions.
\newblock Technical Report arXiv:1402.5902, 2015.

\bibitem[Zhang and Sabuncu(2018)]{sabuncu18nips}
Zhilu Zhang and Mert~R. Sabuncu.
\newblock Generalized cross entropy loss for training deep neural networks with
  noisy labels.
\newblock In \emph{Proceedings of the 32Nd International Conference on Neural
  Information Processing Systems}, pages 8792--8802, 2018.

\end{thebibliography}

\onecolumn

\section{DETAILS OF EXPERIMENTAL RESULTS}

\begin{table*}[h] \caption {Balanced accuracy}
\resizebox{1.0\textwidth}{!}{%
  \begin{tabular}{c||c||c|c|c|c|c|c}
    \hline
    Dataset & Method & 2 & 4 & 8 & 16 & 32 & 64 \\
    \hline
    \multirow{3}{*}{Banknote} & InvCal &
    \textbf{0.9994 $\pm$ 0.0013} &
    \textbf{0.9994 $\pm$ 0.0015} &
    0.9994 $\pm$ 0.0014 &
    0.9927 $\pm$ 0.0061 &
    \textbf{0.9636 $\pm$ 0.0089} &
    \textbf{0.9053 $\pm$ 0.0369} \\
    & alter-pSVM &
    \textbf{0.9994 $\pm$ 0.0013} &
    \textbf{0.9994 $\pm$ 0.0015} &
    \textbf{1.0000 $\pm$ 0.0000} &
    \textbf{0.9987 $\pm$ 0.0018} &
    0.8969 $\pm$ 0.0771 &
    0.7244 $\pm$ 0.1487 \\
    & corrected loss &
    0.9895 $\pm$ 0.0085 &
    0.9668 $\pm$ 0.0260 &
    0.9431 $\pm$ 0.0277 &
    0.8995 $\pm$ 0.0415 &
    0.7976 $\pm$ 0.1065 &
    0.7400 $\pm$ 0.1614 \\
    \hline
    \multirow{3}{*}{Cardiotocography} & InvCal &
    0.8286 $\pm$ 0.0079 &
    \textbf{0.8349 $\pm$ 0.0274} &
    0.7078 $\pm$ 0.1027 &
    \textbf{0.5805 $\pm$ 0.0912} &
    0.5340 $\pm$ 0.1169 &
    \textbf{0.4978 $\pm$ 0.0048} \\
    & alter-pSVM &
    \textbf{0.8465 $\pm$ 0.0083} &
    0.8241 $\pm$ 0.0555 &
    0.6814 $\pm$ 0.1749 &
    0.4424 $\pm$ 0.0060 &
    0.4374 $\pm$ 0.0047 &
    0.4471 $\pm$ 0.0081 \\
    & corrected loss &
    0.7985 $\pm$ 0.0779 &
    0.7616 $\pm$ 0.1278 &
    \textbf{0.7137 $\pm$ 0.0747} &
    0.5273 $\pm$ 0.1234 &
    \textbf{0.5407 $\pm$ 0.1576} &
    0.4925 $\pm$ 0.1457 \\
    \hline
    \multirow{3}{*}{CNAE} & InvCal &
    0.7727 $\pm$ 0.0504 &
    0.6974 $\pm$ 0.0694 &
    0.5372 $\pm$ 0.0231 &
    0.5000 $\pm$ 0.0000 &
    0.5000 $\pm$ 0.0000 &
    0.5000 $\pm$ 0.0000\\
    & alter-pSVM &
    0.9207 $\pm$ 0.0112 &
    0.7418 $\pm$ 0.1769 &
    0.7101 $\pm$ 0.1576 &
    0.5635 $\pm$ 0.1694 &
    0.4887 $\pm$ 0.0305 &
    0.4726 $\pm$ 0.0282 \\
    & corrected loss &
    \textbf{0.9367 $\pm$ 0.0174} &
    \textbf{0.8049 $\pm$ 0.0537} &
    \textbf{0.7552 $\pm$ 0.0634} &
    \textbf{0.6047 $\pm$ 0.0901} &
    \textbf{0.6055 $\pm$ 0.1270} &
    \textbf{0.5363 $\pm$ 0.0473} \\
    \hline
    \multirow{3}{*}{Contraceptive} & InvCal &
     0.6593 $\pm$ 0.0103 &
     0.6569 $\pm$ 0.0093 &
     \textbf{0.6096 $\pm$ 0.0428} &
     \textbf{0.5835 $\pm$ 0.0409} &
     \textbf{0.5908 $\pm$ 0.0169} &
     \textbf{0.5162 $\pm$ 0.0194} \\
    & alter-pSVM &
     0.6600 $\pm$ 0.0127 &
    0.6041 $\pm$ 0.0237 &
    0.5628 $\pm$ 0.0205 &
    0.5321 $\pm$ 0.0475 &
    0.5491 $\pm$ 0.0325 &
    0.4917 $\pm$ 0.0650 \\
    & corrected loss &
    \textbf{ 0.6616  $\pm$  0.0267} &
    \textbf{0.6992  $\pm$  0.0071} &
    0.5698  $\pm$  0.0834 &
    0.5172  $\pm$ 0.0501 &
    0.5525  $\pm$  0.0328 &
    0.5035  $\pm$  0.0191 \\
    \hline
    \multirow{3}{*}{German bank} & InvCal &
      0.6450 $\pm$ 0.0272 &
 0.5420 $\pm$ 0.0426 &
 0.5233 $\pm$ 0.0189 &
 \textbf{0.5371 $\pm$ 0.0321} &
 0.5094 $\pm$ 0.0209 &
 0.5000 $\pm$ 0.0000\\
    & alter-pSVM &
      0.6600 $\pm$ 0.0127 &
 0.6041 $\pm$ 0.0237 &
 0.5628 $\pm$ 0.0205 &
 0.5321 $\pm$ 0.0475 &
\textbf{ 0.5491 $\pm$ 0.0325} &
 0.4917 $\pm$ 0.0650\\
    & corrected loss &
     \textbf{0.6885 $\pm$ 0.0380} &
  \textbf{0.6572  $\pm$  0.0463} &
  \textbf{0.6285  $\pm$  0.0447} &
  0.5123  $\pm$  0.0647 &
  0.4869  $\pm$  0.0730 &
  \textbf{0.5353  $\pm$  0.0593}\\
  \hline
    \multirow{3}{*}{Sports article} & InvCal &
    0.8193 $\pm$ 0.0182 &
 0.7999 $\pm$ 0.0481 &
 0.8030 $\pm$ 0.0199 &
 0.7896 $\pm$ 0.0254 &
 0.7447 $\pm$ 0.0371 &
 0.5993 $\pm$ 0.1149\\
    & alter-pSVM &
       0.8107 $\pm$ 0.0059 &
 0.8266 $\pm$ 0.0445 &
 \textbf{0.8086 $\pm$ 0.0288} &
 \textbf{0.7993 $\pm$ 0.0103} &
 \textbf{0.7683 $\pm$ 0.0332} &
 \textbf{0.7059 $\pm$ 0.2231}\\
    & corrected loss &
       \textbf{0.8216  $\pm$  0.0201} &
  \textbf{0.8291  $\pm$  0.0341} &
  0.8050  $\pm$  0.0179 &
  0.6828  $\pm$  0.1895 &
  0.7610  $\pm$  0.0394 &
 0.6360 $\pm$  0.1387\\
 \hline
    \multirow{3}{*}{Wireless Indoor} & InvCal &
        0.9937   $\pm$  0.0042 &
 0.9898   $\pm$  0.0035 &
 0.9893   $\pm$  0.0082 &
 0.9882   $\pm$  0.0078 &
 \textbf{0.9679   $\pm$  0.0142} &
 0.6842   $\pm$  0.1041\\
    & alter-pSVM &
        \textbf{0.9962   $\pm$  0.0045} &
 \textbf{0.9983   $\pm$  0.0020} &
 \textbf{0.9947   $\pm$  0.0058} &
 \textbf{0.9967   $\pm$  0.0024} &
 0.8735   $\pm$  0.2680 &
 \textbf{0.9928   $\pm$  0.0053}\\
    & corrected loss &
         0.9724    $\pm$  0.0080 &
  0.9693    $\pm$   0.0234 &
  0.9488    $\pm$  0.0121 &
 0.9087   $\pm$  0.0342 &
  0.8646    $\pm$   0.0732 &
  0.7648    $\pm$   0.2458\\
  \end{tabular}
}
\end{table*}

\begin{table*}[h] \caption {Accuracy}
\resizebox{1.0\textwidth}{!}{%
  \begin{tabular}{c||c||c|c|c|c|c|c}
    \hline
    Dataset & Method & 2 & 4 & 8 & 16 & 32 & 64 \\
    \hline
    \multirow{3}{*}{Banknote}
    & InvCal & 0.9993 $\pm$ 0.0016 & 0.9993 $\pm$ 0.0016 & 0.9993 $\pm$ 0.0016 & 0.9923 $\pm$ 0.0063 & 0.9648 $\pm$ 0.0086 & 0.9115 $\pm$ 0.0365 \\
    & alter-pSVM &
     0.9993 $\pm$ 0.0016 &
 0.9993 $\pm$ 0.0016 &
 1.0000 $\pm$ 0.0000 &
 0.9986 $\pm$ 0.0019 &
 0.9007 $\pm$ 0.0738 &
 0.7264 $\pm$ 0.1477\\
    & corrected loss &
     0.9891 $\pm$ 0.0089 &
  0.9674  $\pm$ 0.0258 &
  0.9423  $\pm$  0.0274 &
  0.9007  $\pm$  0.0420 &
  0.7951  $\pm$  0.1065 &
  0.7385  $\pm$  0.1614\\
    \hline
    \multirow{3}{*}{Cardiotocography}
    & InvCal &
     0.8540 $\pm$ 0.0051 &
      0.8335 $\pm$ 0.0151 &
      0.8130 $\pm$ 0.0444 &
      0.7502 $\pm$ 0.0372 &
      0.7571 $\pm$ 0.0629 &
      0.7961 $\pm$ 0.0106
    \\
    & alter-pSVM &
      0.8146 $\pm$ 0.0136 &
   0.8037 $\pm$ 0.0311 &
  0.7609 $\pm$ 0.0759 &
  0.6730 $\pm$ 0.0097 &
  0.6719 $\pm$ 0.0100 &
  0.6900 $\pm$ 0.0083
    \\
    & corrected loss &
    0.7977 $\pm$ 0.0341 &
  0.7553 $\pm$ 0.0350 &
   0.7526  $\pm$  0.0608 &
   0.6693  $\pm$  0.0721 &
   0.7000  $\pm$  0.0559 &
   0.6489  $\pm$  0.1464
    \\
    \hline
    \multirow{3}{*}{CNAE}
    & InvCal &
      0.9491 $\pm$ 0.0113 &
 0.9250 $\pm$ 0.0286 &
 0.8907 $\pm$ 0.0184 &
 0.8889 $\pm$ 0.0146 &
 0.8839 $\pm$ 0.0200 &
 0.8859 $\pm$ 0.0218
    \\
    & alter-pSVM  &
     0.9815 $\pm$ 0.0033 &
 0.9250 $\pm$ 0.0511 &
 0.9204 $\pm$ 0.0524 &
 0.8713 $\pm$ 0.0584 &
 0.8323 $\pm$ 0.0294 &
 0.8218 $\pm$ 0.0312
    \\
    & corrected loss &
     0.9463  $\pm$  0.0086 &
  0.8815  $\pm$  0.0249 &
  0.8648  $\pm$  0.0359 &
  0.6491  $\pm$  0.1689 &
  0.6847  $\pm$  0.1211 &
  0.4724  $\pm$  0.2511
    \\
    \hline
    \multirow{3}{*}{Contraceptive}
    & InvCal &
     0.6774 $\pm$ 0.0080 &
 0.6862 $\pm$ 0.0174 &
 0.6308 $\pm$ 0.0354 &
 0.6193 $\pm$ 0.0471 &
 0.6299 $\pm$ 0.0208 &
 0.5813 $\pm$ 0.0199
    \\
    & alter-pSVM &
     0.7320 $\pm$ 0.0091 &
 0.6850 $\pm$ 0.0292 &
 0.6250 $\pm$ 0.0170 &
 0.6130 $\pm$ 0.0362 &
 0.6190 $\pm$ 0.0293 &
 0.5629 $\pm$ 0.0630
    \\
    & corrected loss &
       0.6646  $\pm$ 0.0434 &
  0.7030 $\pm$ 0.0140 &
  0.5660  $\pm$ 0.0855 &
  0.5022  $\pm$ 0.0430 &
  0.5632  $\pm$ 0.0322 &
  0.5184  $\pm$ 0.0277
    \\
    \hline
    \multirow{3}{*}{German bank}
    & InvCal &
     0.7290 $\pm$ 0.0204 &
 0.7150 $\pm$ 0.0252 &
 0.6980 $\pm$ 0.0164 &
 0.7120 $\pm$ 0.0144 &
 0.7052 $\pm$ 0.0058 &
 0.7026 $\pm$ 0.0000
    \\
    & alter-pSVM &
     0.7320 $\pm$ 0.0091 &
 0.6850 $\pm$ 0.0292 &
 0.6250 $\pm$ 0.0170 &
 0.6130 $\pm$ 0.0362 &
 0.6190 $\pm$ 0.0293 &
 0.5629 $\pm$ 0.0630
    \\
    & corrected loss &
     0.6860 $\pm$ 0.0428 &
 0.6590 $\pm$ 0.0482 &
  0.6460  $\pm$ 0.0444 &
  0.5280  $\pm$  0.1201 &
  0.5233  $\pm$  0.0776 &
  0.5560  $\pm$  0.0888
    \\
  \hline
    \multirow{3}{*}{Sports article}
    & InvCal &
     0.8330 $\pm$ 0.0130 &
 0.8390 $\pm$ 0.0439 &
 0.8230 $\pm$ 0.0175 &
 0.8160 $\pm$ 0.0164 &
 0.7802 $\pm$ 0.0158 &
 0.6974 $\pm$ 0.0777
    \\
    & alter-pSVM &
     0.8270 $\pm$ 0.0076 &
 0.8420 $\pm$ 0.0396 &
 0.8210 $\pm$ 0.0248 &
 0.8160 $\pm$ 0.0175 &
 0.7828 $\pm$ 0.0370 &
 0.7259 $\pm$ 0.2034
    \\
    & corrected loss &
      0.8220  $\pm$  0.0172 &
  0.8450  $\pm$  0.0251 &
  0.8170  $\pm$  0.0117 &
  0.6940  $\pm$  0.1756 &
  0.7724  $\pm$  0.0346 &
  0.6233  $\pm$ 0.1058
    \\
 \hline
    \multirow{3}{*}{Wireless Indoor}
    & InvCal &
     0.9960 $\pm$ 0.0022 &
 0.9935 $\pm$ 0.0022 &
 0.9945 $\pm$ 0.0037 &
 0.9930 $\pm$ 0.0027 &
 0.9850 $\pm$ 0.0071 &
 0.8431 $\pm$ 0.0633
    \\
    & alter-pSVM &
     0.9970 $\pm$ 0.0027 &
 0.9985 $\pm$ 0.0014 &
 0.9970 $\pm$ 0.0033 &
 0.9980 $\pm$ 0.0011 &
 0.9100 $\pm$ 0.1929 &
 0.9948 $\pm$ 0.0039
    \\
    & corrected loss &
     0.9700 $\pm$  0.0096 &
  0.9710  $\pm$  0.0208 &
  0.9445  $\pm$  0.0165 &
  0.9060  $\pm$ 0.0280 &
 0.8375 $\pm$  0.0982 &
  0.7336  $\pm$  0.3003
    \\
  \end{tabular}
}
\end{table*}

\section{PROOFS}

We introduce the following notation. The Lipschitz constant of a loss $\ell$ is the smallest $L$ for which $\ell$ is $L$-Lipschitz, and is denoted $|\ell|$. Additionally, for any loss $\ell$ define
$$
|\ell|_0 := \max_{y \in \{-1,1\}} \ell(0,y).
$$
Note that for a margin loss $\ell(t,y) = \varphi(yt)$, we have $|\ell|_0 = \varphi(0)$.

Our proofs leverage techniques drawn from Rademacher complexity theory. This theory applies to classes of bounded function, and the following lemma ensures we are in this setting.
\begin{lemma}
\label{lem:bndloss}
Suppose $k$ is an SPD kernel on $\sX$, bounded by $K$. Then for any $R > 0$, $f\in B_k(R)$, and $x \in \sX$,
$$
| f(x) | \le RK.
$$
If in addition $\ell$ is a Lipschitz loss, then for any $R > 0$ and $f\in B_k(R)$, and any $x \in \sX$ and $y \in \{-1,1\}$,
$$
\big|\ell(f(x),y)\big| \leq |\ell|_0 + |\ell| R K.
$$
\end{lemma}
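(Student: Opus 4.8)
The plan is to handle the two claims in sequence, since the second follows directly from the first together with the Lipschitz hypothesis.

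For the first claim, I would invoke the reproducing property of the RKHS $\sH_k$: for every $f \in \sH_k$ and $x \in \sX$, one has $f(x) = \inner{f, k(\cdot, x)}_{\sH_k}$. Applying Cauchy--Schwarz then gives $\abs{f(x)} \le \norm{f}_{\sH_k}\,\norm{k(\cdot,x)}_{\sH_k}$. The first factor is at most $R$ because $f \in B_k(R)$. For the second factor, the reproducing property again yields $\norm{k(\cdot,x)}_{\sH_k}^2 = \inner{k(\cdot,x), k(\cdot,x)}_{\sH_k} = k(x,x)$, and the assumption that the kernel is bounded by $K$ gives $k(x,x) \le K^2$, equivalently $\norm{k(\cdot,x)}_{\sH_k} \le K$; for the Gaussian example $k(x,x) = 1 = K^2$, consistent with this reading. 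Combining the two factors gives $\abs{f(x)} \le RK$, as claimed.

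For the second claim, fix $x \in \sX$ and $y \in \{-1,1\}$ and use the decomposition $\ell(f(x),y) = \ell(0,y) + \brac{\ell(f(x),y) - \ell(0,y)}$. Since $\ell$ is $\abs{\ell}$-Lipschitz in its first argument and $f(x) \in \R$, the bracketed term is bounded in absolute value by $\abs{\ell}\cdot\abs{f(x) - 0} = \abs{\ell}\,\abs{f(x)}$, which by the first claim is at most $\abs{\ell} R K$. For the remaining term, nonnegativity of the loss gives $\abs{\ell(0,y)} = \ell(0,y) \le \max_{y'} \ell(0,y') = \abs{\ell}_0$. The triangle inequality then yields $\abs{\ell(f(x),y)} \le \abs{\ell}_0 + \abs{\ell} R K$.

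The argument is entirely routine and I do not anticipate a substantive obstacle; the only point requiring care is the precise meaning of ``the kernel is bounded by $K$'' and the corresponding bookkeeping of the constant, which must be read as $\sqrt{k(x,x)} \le K$ in order to land the factor $RK$ rather than $R\sqrt{K}$. Everything else reduces to the reproducing property, Cauchy--Schwarz, and a single application of the Lipschitz and nonnegativity properties of $\ell$.
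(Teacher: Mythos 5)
Your proof is correct and follows essentially the same route as the paper's: the reproducing property plus Cauchy--Schwarz for the first claim, and the triangle inequality with the Lipschitz property and nonnegativity of $\ell$ for the second. Your explicit remark that ``bounded by $K$'' must mean $\sqrt{k(x,x)} \le K$ is a point the paper leaves implicit (but relies on later, e.g.\ in the Rademacher bound), so it is a welcome clarification rather than a deviation.
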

\begin{proof}
Let $\sH_k$ denote the RKHS associated to $k$. The first part follows from the reproducing property and Cauchy-Schwarz:
$$
|f(x)| = |\langle f,k(\cdot,x)\rangle| \le \|f\|_{{\cal H}_k} \|k(\cdot,x)\|_{{\cal H}_k} = RK.
$$
For the second part, by the Lipschitz continuity of $\ell$,
\begin{align*}
  \big|\ell(f(x),y)\big| & \leq \ell(0,y)+\big|\ell(f(x),y)-\ell(0,y)\big| \\
   & \leq |\ell|_0 + |\ell| |f(x)-0| \\
   & \leq |\ell|_0+|\ell| R K,
\end{align*}
where the last step uses the first part.
\end{proof}

The following lemma is used to deduce Theorems \ref{thm:gen}, \ref{thm:gen2} and \ref{thm:gen4} from Theorem \ref{thm:gen3} below. Recall that $(\ell_\a)^\rho$ denotes $\ell_\a^\rho$.

\begin{lemma}
\label{lem:lossconstants}
Let $\ell$ be a Lipschitz margin loss with $\ell(t,y) = \varphi(yt)$. Let $\rho = (\rho^+,\rho^-) \in [0,1)^2$ such that $\rho^- + \rho^+ < 1$, and $\a = (\a^+,\a^-) \in \R^2_+ $. Then
\begin{enumerate}
\item $|\ell^\rho|_0 = \varphi(0)$
\item $|\ell^{\rho}| \le |\ell| \frac{1+|\rho^+ - \rho^-|}{1-\rho^- - \rho^+}$
\item $|\ell_\a^\rho|_0 \le \frac{2\varphi(0)}{1-\rho^- - \rho^+} \max(\a^+,\a^-)$
\item $|\ell_\a^{\rho}| \le |\ell| \frac{1+|\rho^+ - \rho^-|}{1-\rho^- - \rho^+} \max(\a^+,\a^-)$.
\end{enumerate}
Now suppose that for some $\g^+, \g^- \in [0,1]$ with $\g^+ > \g^-$, we have $\a = (\frac1{\g^+ + \g^-},\frac1{2-\g^+ - \g^-})$, and $\rho = (\frac{\g^-}{\g^- + \g^+}, \frac{1 - \g^+}{2 - \g^- - \g^+})$. Then
\begin{enumerate}
\setcounter{enumi}{4}
\item $|\ell_\a^\rho|_0 \le \frac{\varphi(0)}{\g^+ - \g^-}$
\item
$|\ell_\a^{\rho}| \le  \frac{|\ell|}{\g^+ - \g^-}$.
\end{enumerate}

\end{lemma}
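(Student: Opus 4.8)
The plan is to treat all six bounds uniformly by observing that every branch of a corrected loss (and of a cost-sensitive corrected loss) is an affine combination of the two maps $t \mapsto \varphi(t)$ and $t \mapsto \varphi(-t)$. Both of these maps equal $\varphi(0)$ at $t=0$, and both have Lipschitz constant exactly $\abs{\ell}$ (the Lipschitz constant of $\varphi$), since pre-composing with $t \mapsto -t$ preserves the Lipschitz constant. I would then use two elementary facts repeatedly: if $g(t) = c_1 \varphi(t) + c_2 \varphi(-t)$, then $g(0) = (c_1 + c_2)\varphi(0)$, and the Lipschitz constant of $g$ is at most $(\abs{c_1}+\abs{c_2})\abs{\ell}$. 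The definition $\abs{\ell} = \max_y (\text{Lipschitz constant of } \ell(\cdot,y))$ and $\abs{\ell}_0 = \max_y \ell(0,y)$ mean each bound reduces to evaluating the two branches and taking a maximum.

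For Parts 1 and 2, I would write $\ell^\rho(\cdot,1)$ and $\ell^\rho(\cdot,-1)$ in the above form. The coefficients of the $y=1$ branch sum to $\frac{(1-\rho^-)-\rho^+}{1-\rho^--\rho^+} = 1$, so $\ell^\rho(0,1) = \varphi(0)$, and likewise $\ell^\rho(0,-1) = \varphi(0)$; taking the max over $y$ gives Part 1. For Part 2, the sum of absolute coefficient magnitudes of the $y=1$ branch is $\frac{(1-\rho^-)+\rho^+}{1-\rho^--\rho^+}$ and of the $y=-1$ branch is $\frac{(1-\rho^+)+\rho^-}{1-\rho^--\rho^+}$; since $\max(1-\rho^-+\rho^+,\, 1-\rho^++\rho^-) = 1 + \abs{\rho^+-\rho^-}$, maximizing over the two branches yields Part 2.

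Parts 3 and 4 are identical in spirit, except that $\ell_\a(\cdot,1) = \a^+\varphi(\cdot)$ and $\ell_\a(\cdot,-1) = \a^-\varphi(-\cdot)$, so each coefficient carries an extra factor $\a^+$ or $\a^-$. Bounding each such factor by $\max(\a^+,\a^-)$ pulls that quantity out, and the remaining coefficient sums are handled as in Parts 1--2. The only difference is at $t=0$ for Part 3, where I would use the crude bound $\abs{(1-\rho^-)\a^+ - \rho^+\a^-} \le \a^+ + \a^- \le 2\max(\a^+,\a^-)$, which produces the factor of $2$; Part 4 keeps the factor $1+\abs{\rho^+-\rho^-}$ exactly as in Part 2.

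Parts 5 and 6 specialize $\a$ and $\rho$ to the LLP values and are the only place requiring genuine computation; I expect this algebraic simplification to be the main obstacle, though it is elementary. I would first record $1-\rho^- = \frac{1-\g^-}{2-\g^--\g^+}$ and $1-\rho^+ = \frac{\g^+}{\g^-+\g^+}$, and use \eqref{eqn:rhogam} for the denominator $1-\rho^--\rho^+$. The key cancellation is that the branch numerators collapse: for Part 5 the $y=1$ branch gives $(1-\rho^-)\a^+ - \rho^+\a^- = \frac{1-2\g^-}{(2-\g^--\g^+)(\g^++\g^-)}$, and dividing by \eqref{eqn:rhogam} leaves $\frac{1-2\g^-}{\g^+-\g^-}\varphi(0)$; the $y=-1$ branch analogously gives $\frac{2\g^+-1}{\g^+-\g^-}\varphi(0)$, and since $\abs{1-2\g^-}, \abs{2\g^+-1} \le 1$, the maximum is at most $\frac{\varphi(0)}{\g^+-\g^-}$. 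For Part 6 the simplification is even cleaner: the sum of absolute coefficient magnitudes of each branch telescopes, e.g.\ $(1-\rho^-)\a^+ + \rho^+\a^- = \frac{1}{(2-\g^--\g^+)(\g^++\g^-)}$ (and identically for the other branch), so dividing by \eqref{eqn:rhogam} yields exactly $\frac{\abs{\ell}}{\g^+-\g^-}$ for both branches, giving Part 6.
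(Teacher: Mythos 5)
Your proposal is correct and takes essentially the same approach as the paper's proof: both decompose each branch of the corrected loss as an affine combination of $\varphi(t)$ and $\varphi(-t)$, bound Lipschitz constants by the triangle inequality on the coefficients (with the identity $\max\{1-\rho^-+\rho^+,\,1-\rho^++\rho^-\} = 1+|\rho^+-\rho^-|$), and invoke \eqref{eqn:rhogam} for the cancellations in parts 5 and 6. The only immaterial difference is in part 5, where you evaluate the branches at $t=0$ exactly as $(1-2\gamma^-)\varphi(0)/(\gamma^+-\gamma^-)$ and $(2\gamma^+-1)\varphi(0)/(\gamma^+-\gamma^-)$ and bound the numerators by $1$, whereas the paper first bounds each signed difference by the corresponding sum and then telescopes; both yield the identical final bound.
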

\begin{proof}
1. follows from definitions and 2. is a special case of 4 with $\a = (1,1)$.
To establish 3.,
\begin{multline*}
\max_y \ell_\a^\rho(0,y) = \max \left\{ \ell_\a^\rho(0,1), \ell_\a^\rho(0,-1) \right\} \\
\begin{aligned}
& = \frac{\max \left\{ (1 - \rho^-) \ell_\a(0,1) - \rho^+ \ell_\a(0,-1), (1 - \rho^+) \ell_\a(0,-1) - \rho^- \ell_\a(0,1) \right\}}{1 - \rho^- - \rho^+} \\
& = \frac{\max \left\{ (1 - \rho^-) \a^+ \ell(0,1) - \rho^+ \a^- \ell(0,-1), (1 - \rho^+) \a^- \ell(0,-1) - \rho^- \a^+ \ell(0,1) \right\}}{1 - \rho^- - \rho^+} \\
& =\frac{\varphi(0)}{1 - \rho^- - \rho^+} \max \left\{ (1 - \rho^-) \a^+ - \rho^+ \a^- , (1 - \rho^+) \a^- - \rho^- \a^+   \right\} \\
& \le \frac{\varphi(0)}{1 - \rho^- - \rho^+} \max \left\{ (1 - \rho^-) \a^+ + \rho^+ \a^- , (1 - \rho^+) \a^- + \rho^- \a^+   \right\} \\
& \le \frac{\varphi(0)}{1 - \rho^- - \rho^+} \left( (1 - \rho^-) \a^+ + \rho^+ \a^- +  (1 - \rho^+) \a^- + \rho^- \a^+   \right) \\
& = \frac{\varphi(0)}{1 - \rho^- - \rho^+} (\a^+ + \a^-) \\
& \le \frac{2\varphi(0)}{1 - \rho^- - \rho^+} \max(\a^+, \a^-).
\end{aligned}
\end{multline*}
To establish 4., let $t,t' \in R$. Then
\begin{multline*}
|\ell_\a^{\rho}(t,1) - \ell_\a^{\rho}(t',1)|  \\
\begin{aligned}
& = \frac{1}{1 - \rho^- - \rho^+} \abs{(1-\rho^-) \a^+ \ell(t,1) - \rho^+ \a^- \ell(t,-1) - [(1-\rho^-) \a^+ \ell(t',1) - \rho^+ \a^- \ell(t',-1)]} \\
& = \frac{1}{1 - \rho^- - \rho^+} \abs{(1-\rho^-) \a^+ (\ell(t,1) - \ell(t',1))  - \rho^+ \a^- (\ell(t,-1) - \ell(t',-1))} \\
&\le \frac{|\ell|}{1 - \rho^- - \rho^+} \Big[ (1-\rho^-) \a^+ + \rho^+ \a^- \Big]|t - t'| \\
&\le \frac{|\ell|}{1 - \rho^- - \rho^+} \max(\a^+, \a^-) \Big[ 1-\rho^- + \rho^+ \Big]|t - t'|
\end{aligned}
\end{multline*}
where the first inequality results from the triangle inequality and the Lipschitz property of $\ell$. Similarly,
\begin{align*}
|\ell_\a^{\rho}(t,-1) - \ell_\a^{\rho}(t',-1)|
&\le \frac{|\ell|}{1 - \rho^- - \rho^+} \Big[ (1-\rho^+) \a^- + \rho^- \a^+ \Big] |t - t'| \\
&\le \frac{|\ell|}{1 - \rho^- - \rho^+} \max(\a^+, \a^-) \Big[ 1-\rho^+ + \rho^- \Big] |t - t'|.
\end{align*}
The result now follows from
$$
\max \left\{ 1-\rho^- + \rho^+, 1-\rho^+ + \rho^-  \right\} = 1 + |\rho^+ - \rho^-|.
$$
To establish 5.,
\begin{align*}
\max_y \ell_\a^\rho(0,y) &= \max \left\{ \ell_\a^\rho(0,1), \ell_\a^\rho(0,-1) \right\} \\
& = \frac{\varphi(0)}{1 - \rho^- - \rho^+}
\max \Bigg\{ \frac{1 - \g^-}{2 - \g^- - \g^+} \frac1{\g^+ + \g^-} - \frac{\g^-}{\g^- + \g^+} \frac1{2-\g^+ - \g^-}, \\
& \qquad \qquad \qquad \qquad \qquad \qquad \frac{\g^+}{\g^- + \g^+} \frac1{2-\g^+ - \g^-} - \frac{1 - \g^+}{2 - \g^- - \g^+} \frac1{\g^+ + \g^-}  \Bigg\} \\
&\le \frac{\varphi(0)}{1 - \rho^- - \rho^+}
\max \Bigg\{ \frac{1 - \g^-}{2 - \g^- - \g^+} \frac1{\g^+ + \g^-} + \frac{\g^-}{\g^- + \g^+} \frac1{2-\g^+ - \g^-},  \\
& \qquad \qquad \qquad \qquad \qquad \qquad \frac{\g^+}{\g^- + \g^+} \frac1{2-\g^+ - \g^-} + \frac{1 - \g^+}{2 - \g^- - \g^+} \frac1{\g^+ + \g^-}  \Bigg\} \\
&=\frac{\varphi(0)}{1 - \rho^- - \rho^+} \frac1{(\g^- + \g^+)(2-\g^+ - \g^-)} \\
&= \frac{\varphi(0)}{\g^+ - \g^-},
\end{align*}
where in the last step we used Eqn. \eqref{eqn:rhogam}.

To establish 6., let $t,t' \in \R$. Arguing as in case 4., we have
$$
|\ell_\a^{\rho}(t,1) - \ell_\a^{\rho}(t',1)| \le \frac{|\ell|}{1 - \rho^- - \rho^+} \Big[ (1-\rho^-) \a^+ + \rho^+ \a^- \Big]|t - t'|.
$$
Now observe
\begin{align*}
\frac{|\ell|}{1 - \rho^- - \rho^+} \Big[ (1-\rho^-) \a^+ + \rho^+ \a^- \Big]
&= \frac{|\ell|}{1 - \rho^- - \rho^+} \Bigg[ \frac{1 - \g^-}{2 - \g^- - \g^+} \frac1{\g^+ + \g^-} + \frac{\g^-}{\g^- + \g^+} \frac1{2-\g^+ - \g^-} \Bigg] \\
&= \frac{|\ell|}{1 - \rho^- - \rho^+} \frac1{(\g^- + \g^+)(2-\g^+ - \g^-)} \\
&= \frac{|\ell|}{\g^+ - \g^-},
\end{align*}
where the last step uses Eqn. \eqref{eqn:rhogam}. The case $y=-1$ follows similarly.
\end{proof}

We may now state the master theorem, of which Theorems \ref{thm:gen}, \ref{thm:gen2} and \ref{thm:gen4} are special cases.

\begin{thm}
\label{thm:gen3}
Let $\ell$ be a Lipschitz margin loss. For $i=1,\ldots, N$, let (a) $\pi_i \in (0,1)$, (b) $P_i$ be a probability distribution on $\sX \times \{-1,1\}$, (c) $\rho_i = (\rho_i^-,\rho_i^+) \in [0,1)^2$ such that $\rho_i^- + \rho_i^+ < 1$, (d) $\a_i \in \R_+^2$, and (e) $S_i$ consist of $n_i$ iid samples of $P_i^{\rho_i}$. For all $R > 0$, $w \in \Delta^N$, and $0 < \delta \le 1$, we have that with probability at least $1-\delta$ with respect to the draws of the corrupted samples $S_1, \ldots, S_N$,
\begin{multline*}
\sup_{f \in B_{k}(R)} \abs{
\sum_{i=1}^N w_i \left( \left[ \frac1{n_i} \sum_{j=1}^{n_i} \ell_{\a_i}^{\rho_i}(f(X_{ij}),\Yt_{ij}) \right] -  \sE_{P_i,\a_i}(f) \right)} \\
\leq 2 K R \sqrt{\sum_{i=1}^N \frac{w_i^2}{n_i} |\ell_{\a_i}^{\rho_i}|^2} +
 \sqrt{\sum_{i=1}^N \frac{w_i^2}{n_i} (|\ell_{\a_i}^{\rho_i}|_0 + |\ell_{\a_i}^{\rho_i}|KR)^2  \frac{\log (2/\delta)}{2}}.
\end{multline*}
\end{thm}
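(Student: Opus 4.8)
The plan is to treat the left-hand side as the deviation of a weighted empirical process from its mean, and to control it by the usual two-step recipe: McDiarmid's bounded-differences inequality for concentration around the mean, and a symmetrization/contraction argument for the mean itself. The crux is to carry the heterogeneous weights $w_i$, sample sizes $n_i$, and per-source Lipschitz constants $|\ell_{\a_i}^{\rho_i}|$ through these steps so that they recombine exactly into the two stated square-root factors.

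First I would identify the mean. By Lemma~\ref{lem:bndloss} applied to the Lipschitz loss $\ell_{\a_i}$, the map $f\mapsto \ell_{\a_i}(f(\cdot),y)$ is bounded on $B_k(R)$, so Lemma~\ref{lem:corrected} (applied to $\ell_{\a_i}$, whose correction is $\ell_{\a_i}^{\rho_i}$) gives $\mathbb{E}_{(X,\Yt)\sim P_i^{\rho_i}}[\ell_{\a_i}^{\rho_i}(f(X),\Yt)] = \sE_{P_i,\a_i}(f)$ for every $f\in B_k(R)$. Hence $\sE_{P_i,\a_i}(f)$ is precisely the expectation over $S_i$ of the $i$-th inner average, so the supremand is a centered empirical process; write $\Phi$ for the displayed supremum. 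For concentration I would then apply McDiarmid to $\Phi$: changing one example $(X_{ij},\Yt_{ij})$ alters only the $i$-th average, and by Lemma~\ref{lem:bndloss} every term satisfies $|\ell_{\a_i}^{\rho_i}(f(x),y)|\le |\ell_{\a_i}^{\rho_i}|_0 + |\ell_{\a_i}^{\rho_i}|KR$ on $B_k(R)$, bounding the per-example difference by $\tfrac{w_i}{n_i}(|\ell_{\a_i}^{\rho_i}|_0 + |\ell_{\a_i}^{\rho_i}|KR)$. Summing the squares over all $\sum_i n_i$ examples gives $\sum_i \tfrac{w_i^2}{n_i}(|\ell_{\a_i}^{\rho_i}|_0 + |\ell_{\a_i}^{\rho_i}|KR)^2$, and applying the inequality to the two one-sided suprema with a union bound (this is where the $2/\delta$ enters) yields the second term as a deviation of $\Phi$ above $\mathbb{E}\Phi$.

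Finally, I would bound $\mathbb{E}\Phi$. Standard symmetrization introduces Rademacher signs $\sigma_{ij}$ and a factor of $2$, reducing matters to $\mathbb{E}_{S,\sigma}\sup_{f\in B_k(R)}\sum_{i,j}\sigma_{ij}\tfrac{w_i}{n_i}\ell_{\a_i}^{\rho_i}(f(X_{ij}),\Yt_{ij})$. Since $u\mapsto \tfrac{w_i}{n_i}\ell_{\a_i}^{\rho_i}(u,\Yt_{ij})$ is $\tfrac{w_i}{n_i}|\ell_{\a_i}^{\rho_i}|$-Lipschitz, the Ledoux--Talagrand contraction principle applied coordinatewise replaces each summand by $\sigma_{ij}\tfrac{w_i}{n_i}|\ell_{\a_i}^{\rho_i}|f(X_{ij})$. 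The reproducing property rewrites the supremum as $R\,\|\sum_{i,j}\sigma_{ij}\tfrac{w_i}{n_i}|\ell_{\a_i}^{\rho_i}|\,k(\cdot,X_{ij})\|_{\sH_k}$, and Jensen's inequality together with $\mathbb{E}[\sigma_{ij}\sigma_{i'j'}]=0$ for distinct indices and $k(x,x)\le K^2$ gives $\mathbb{E}_\sigma\|\cdot\|_{\sH_k}\le K\sqrt{\sum_i \tfrac{w_i^2}{n_i}|\ell_{\a_i}^{\rho_i}|^2}$. Collecting the factor of $2$ yields $\mathbb{E}\Phi\le 2KR\sqrt{\sum_i \tfrac{w_i^2}{n_i}|\ell_{\a_i}^{\rho_i}|^2}$, the first term; adding the concentration term completes the proof.

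I expect the contraction/RKHS step to be the main obstacle: ordinary Rademacher analysis treats a single sample with uniform $1/n$ weights and one Lipschitz constant, whereas here $w_i$, $n_i$, and $|\ell_{\a_i}^{\rho_i}|$ all vary across sources and must be absorbed coordinatewise into the contraction and then recombined through the variance computation of the RKHS-valued Rademacher sum. Obtaining exactly $\sqrt{\sum_i w_i^2|\ell_{\a_i}^{\rho_i}|^2/n_i}$, rather than a looser product of a maximum and a sum, hinges on applying contraction per coordinate and exploiting independence of the $\sigma_{ij}$ so that the cross terms vanish.
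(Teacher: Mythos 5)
Your proposal is correct and follows essentially the same route as the paper's proof: McDiarmid's bounded-differences inequality applied to each one-sided supremum (with the same per-example constants $\tfrac{w_i}{n_i}(|\ell_{\a_i}^{\rho_i}|_0 + |\ell_{\a_i}^{\rho_i}|KR)$ and the union bound producing $\log(2/\delta)$), followed by symmetrization and a coordinatewise contraction with per-source Lipschitz constants $\tfrac{w_i}{n_i}|\ell_{\a_i}^{\rho_i}|$, and the RKHS Rademacher computation via the reproducing property, Cauchy--Schwarz, Jensen, and independence of the signs. The only cosmetic difference is that you invoke the Ledoux--Talagrand contraction principle where the paper cites an equivalent lemma of Meir and Zhang; the weighted recombination you single out as the main obstacle is handled exactly as in the paper.
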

\begin{proof}
We first review the following properties of the supremum which are easily verified.
\begin{enumerate}
\item[P1]
For any real-valued functions $f_1,f_2: \mathcal{X} \to \mathbb{R}$,
$$
\sup_{x}f_1(x) - \sup_{x}f_2(x)
\le \sup_{x} (f_1(x)-f_2(x)).
$$
\item[P2]
For any real-valued functions $f_1,f_2: \mathcal{X} \to \mathbb{R}$,
$$
\sup_{x} (f_1(x)+f_2(x)) \le \sup_{x}f_1(x) + \sup_{x}f_2(x).
$$
\item[P3]
$\sup(\cdot)$ is a convex function, i.e., if $(x_\lambda)_{\lambda \in \Lambda}$ and
$(x'_\lambda)_{\lambda \in \Lambda}$ are two sequences (where $\Lambda$ is possibly
uncountable), then $\forall \alpha
\in [0,1]$,
\begin{align*}
\sup_{\lambda \in \Lambda} (\alpha x_\lambda + (1-\alpha) x'_\lambda) \le \alpha \sup_{\lambda \in \Lambda} x_\lambda + (1-\alpha) \sup_{\lambda \in \Lambda}x'_\lambda.
\end{align*}
\end{enumerate}

Introduce the variable $S$ to denote all realizations $(X_{ij},\Yt_{ij})$, $1 \le i \le N, 1 \le j \le n_i$.
Since the realizations $X_{ij}$ are independent, we can apply the Azuma-McDiarmid
inequality \citep{mcdiarmid} to the function
$$
\xi(S) := \sup_{f \in B_{k}(R)}
\sum_{i=1}^N w_i \left( \left[ \frac1{n_i} \sum_{j=1}^{n_i} \ell_{\a_i}^{\rho_i}(f(X_{ij}),\Yt_{ij})\right] - \sE_{P_i,\a_i}(f) \right) .
$$
In particular, by Lemmas \ref{lem:bndloss} and \ref{lem:lossconstants}, every loss $\ell_{\a_i}^{\rho_i}$ is bounded by
$K_i := |\ell_{\a_i}^{\rho_i}|_0 + |\ell_{\a_i}^{\rho_i}|KR$. If we modify $S$ by replacing some $(X_{ij},\Yt_{ij})$ with another $(X_{ij}',\Yt_{ij}')$, while leaving all other values in $S$ fixed, then (by P1)
$\xi$ changes by at most $\frac{w_i K_i}{n_i}$, and we obtain
that with probability $1-\delta$ over the draw of
$S_1, \ldots, S_N$,
$$
\xi-\e{\xi} \leq \sqrt{\sum_{i=1}^N \frac{w_i^2 (|\ell_{\a_i}^{\rho_i}|_0 + |\ell_{\a_i}^{\rho_i}|KR)^2}{n_i} \frac{\log (1/\delta)}{2}}.
$$

To bound $\e{\xi}$ we will use ideas from Rademacher complexity theory. Thus let $S'$ denote a separate (ghost) sample of corrupted data $(X_{ij}',\Yt_{ij}') \stackrel{iid}{\sim} P_i^{\rho_i}$, $i=1,\ldots, N$, $j=1,\ldots, n_i$, independent of the realizations in $S$. Let $\widehat{\mbe}_S[f]$ be shorthand for $\sum_i \frac{w_i}{n_i} \sum_j \ell_{\a_i}^{\rho_i}(f(X_{ij}),\Yt_{ij}).$ Denote by $(\eps_{ij})$ $1\leq i \leq N, 1 \le j \le n_i$, iid Rademacher variables (independent from everything else), and let $\mbe_{(\eps_{ij})}$ denote the expectation with respect to these variables. We have
\begin{align*}
\e{\xi} & = \ee{S}{\sup_{f \in B_k(R)} \sum_{i=1}^N w_i \left( \left[ \frac1{n_i} \sum_{j=1}^{n_i} \ell_{\a_i}^{\rho_i}(f(X_{ij}),\Yt_{ij})\right] - \sE_{P_i,\a_i}(f) \right)} \\
&= \ee{S}{\sup_{f \in B_k(R)} \Bigg(  \widehat{\mbe}_S[f]
 - \ee{S'}{\widehat{\mbe}_{S'}[f]} \Bigg)} \\
& \qquad \text{(by Lemmas \ref{lem:corrected} and \ref{lem:bndloss})} \\
&\le \ee{S,S'}{\sup_{f \in B_k(R)} \Bigg( \widehat{\mbe}_S[f]
 - \widehat{\mbe}_{S'}[f]  \Bigg)} \\
& \qquad \text{(by P3 and Jensen's inequality)} \\
&= \ee{S,S'}{\sup_{f \in B_k(R)} \Bigg( \sum_{i=1}^N \frac{w_i}{n_i} \sum_{j=1}^{n_i}\ell_{\a_i}^{\rho_i}(f(X_{ij}),\Yt_{ij}) - \ell_{\a_i}^{\rho_i}(f(X_{ij}'),\Yt_{ij}') \Bigg)} \\
&= \ee{S,S',(\eps_{ij})}{\sup_{f \in B_k(R)} \Bigg( \sum_{i=1}^N \frac{w_i}{n_i} \sum_{j=1}^{n_i} \eps_{ij} \Big(\ell_{\a_i}^{\rho_i}(f(X_{ij}),\Yt_{ij}) - \ell_{\a_i}^{\rho_i}(f(X_{ij}'),\Yt_{ij}')\Big) \Bigg)} \\
& \qquad \text{(for all $i, j$, $(X_{ij},\Yt_{ij})$ and $(X_{ij}',\Yt_{ij}')$ are iid, and $\eps_{ij}$ are symmetric)}\\
&\le \ee{S,S',(\eps_{ij})}{\sup_{f \in B_k(R)} \sum_{i=1}^N \frac{w_i}{n_i} \sum_{j=1}^{n_i} \eps_{ij} \ell_{\a_i}^{\rho_i}(f(X_{ij}),\Yt_{ij})} \\
&\qquad \qquad + \ee{S,S',(\eps_{ij})}{\sup_{f \in B_k(R)} \sum_{i=1}^N \frac{w_i}{n_i} \sum_{j=1}^{n_i} (-\eps_{ij}) \ell_{\a_i}^{\rho_i}(f(X_{ij}'),\Yt_{ij}')}\\
& \qquad \text{(by P2)} \\
&= 2 \mbe_{S}\ee{(\eps_{ij})}{\sup_{f \in B_k(R)} \sum_{i=1}^N \frac{w_i}{n_i} \sum_{j=1}^{n_i} \eps_{ij} \ell_{\a_i}^{\rho_i}(f(X_{ij}),\Yt_{ij})}.
\end{align*}

To bound the innermost expectation we use the following result from \citet{meir03jmlr}.
\begin{lemma} Suppose $\set{\phi_t}, \set{\psi_t}, t=1,\ldots,T$,
are two sets of functions on a set $\Theta$ such that for each $t$ and
$\theta,\theta' \in \Theta, |\phi_t(\theta)-\phi_t(\theta')| \le
|\psi_t(\theta)-\psi_t(\theta')|$.  Then for all functions $c:\Theta
\rightarrow \R,$
\begin{equation*}
\mbe_{(\eps_t)} \brac{\sup_\theta \set{c(\theta)+\sum_{t=1}^T \eps_t
\phi_t(\theta)}}
\le \mbe_{(\eps_t)} \brac{\sup_\theta \set{c(\theta)+\sum_{t=1}^T \eps_t
\psi_t(\theta)}}.
\end{equation*}
\label{lemmaZhang}
\end{lemma}

Switching from the single index $t$ to our two indices $i$ and $j$, we apply the lemma with $\Theta = \sF$, $\theta = f$, $c(\theta) = 0$, $\phi_{ij}(\theta) = \frac{w_i}{n_i} \ell_{\a_i}^{\rho_i}(f(X_{ij}),\Yt_{ij})$, and $\psi_{ij}(\theta) = \frac{w_i}{n_i} |\ell_{\a_i}^{\rho_i}| f(X_{ij})$ to obtain
\begin{equation*}
\e{\xi} \le 2\mbe_{S} \ee{(\eps_{ij})}{\sup_{f \in B_{k}(R)}  \sum_{i=1}^N \frac{w_i |\ell_{\a_i}^{\rho_i}|}{n_i} \sum_{j=1}^{n_i} \eps_{ij} f(X_{ij})}.
\end{equation*}
This inner expectation may be bounded by a modification of the standard bound of Rademacher complexity for kernel classes. For conciseness introduce $a_i := \frac{w_i |\ell_{\a_i}^{\rho_i}|}{n_i}$. Then
\begin{align}
\mbe_{(\eps_{ij})} \Bigg[ \sup_{f \in B_{k}(R)} & \sum_{i,j} a_i \eps_{ij} f(X_{ij}) \Bigg] \nonumber \\
&= \ee{(\eps_{ij})}{\sup_{f \in B_{k}(R)} \sum_{i,j} a_i \eps_{ij} \langle f, k(\cdot, X_{ij}) \label{eqn:repr} \rangle} \\
&= \ee{(\eps_{ij})}{\sup_{f \in B_{k}(R)} \left\langle f, \sum_{i,j} a_i \eps_{ij} k(\cdot, X_{ij}) \right\rangle} \nonumber \\
&= \ee{(\eps_{ij})}{\left\langle R \frac{\sum_{i,j} a_i \eps_{ij} k(\cdot, X_{ij})}{\| \sum_{i,j} a_i \eps_{ij} k(\cdot, X_{ij}) \|}, \sum_{i,j} a_i \eps_{ij} k(\cdot, X_{ij}) \right\rangle} \label{eqn:cs} \\
&= R \ee{(\eps_{ij})}{\sqrt{\Bigg\| \sum_{i,j} a_i \eps_{ij} k(\cdot, X_{ij}) \Bigg\|^2}} \nonumber \\
&\le R \sqrt{\ee{(\eps_{ij})}{\Bigg\| \sum_{i,j} a_i \eps_{ij} k(\cdot, X_{ij}) \Bigg\|^2}} \label{eqn:jensen} \\
&= R \sqrt{\sum_{i,j} a_i^2 \| k(\cdot,X_{ij}) \|^2} \label{eqn:rad} \\
&= RK \sqrt{\sum_{i=1}^N \frac{w_i^2  |\ell_{\a_i}^{\rho_i}|^2}{n_i}} \label{eqn:repr2},
\end{align}
where \eqref{eqn:repr} uses the reproducing property, \eqref{eqn:cs} is the condition for equality in Cauchy-Schwarz, \eqref{eqn:jensen} is Jensen's inequality, \eqref{eqn:rad} follows from independence of the Rademacher random variables, and \eqref{eqn:repr2} follows from the reproducing property and the bound on the kernel.

All of the above establishes a high probability bound on $\xi$. To get a bound with an absolute value (as in the theorem statement), we may argue analogously to bound the reverse direction, also with probability at least $1-\delta$. Substituting $\delta \to \delta/2$ combines the two directions, giving a bound on the supremum of the absolute value of the generalization error, thus completing the proof.
\end{proof}

\subsection{Proof of Theorem \ref{thm:gen}}

To prove Theorem \ref{thm:gen} we apply Theorem \ref{thm:gen3} with $P_i = P$ and $\a_i = (1,1)$. The result now follows from Lemma \ref{lem:lossconstants} parts 1 and 2, the fact that for $R > \varphi(0)/(|\ell|K)$ it holds $\varphi(0) \le |\ell|KR \le |\ell^{\rho_i}|KR$, and the fact that $1 < \log(2/\delta)/2$ when $\delta \le \frac14$.

\subsection{Proof of Theorem \ref{thm:gen2}}

To prove Theorem \ref{thm:gen2} we apply Theorem \ref{thm:gen3} and note that for the given $\a_i$, $\sE_{P_i,\a_i} = \sE_{\pm}$. Applying Lemma \ref{lem:lossconstants} parts 3 and 4 we have that
$$
|\ell_{\a_i}^{\rho_i}|_0 + |\ell_{\a_i}^{\rho_i}| KR \le \max(\a_i^+,\a_i^-)\frac{1 + |\rho_i^+ - \rho_i^-|}{1 - \rho_i^+ - \rho_i^-} (2\varphi(0) + |\ell|KR).
$$
The result now follows from the observation that $\max(\a_i^+,\a_i^-) = \frac12 \frac1{\min(\pi_i,1-\pi_i)}$, the fact that for $R > 2\varphi(0)/(|\ell|K)$ it holds $2\varphi(0) \le |\ell|KR$, and the fact that $1 < \log(2/\delta)/2$ when $\delta \le \frac14$.

\subsection{Proof of Theorem \ref{thm:gen4}}

To prove Theorem \ref{thm:gen4} we apply Theorem \ref{thm:gen3} and note that for the given $\a_i$, $\sE_{P_i,\a_i} = \sE_{\pm}$. The result now follows from Lemma \ref{lem:lossconstants} parts 5 and 6, the fact that for $R > \varphi(0)/(|\ell|K)$ it holds $\varphi(0) \le |\ell|KR$, and the fact that $1 < \log(2/\delta)/2$ when $\delta \le \frac14$.

\subsection{Proof of Theorem \ref{thm:llpconsist}}

Under the bounded noise condition, it holds for all three settings that with probability at least $1-\delta$,
$$
\sup_{f \in B_k(R)} \left| \sE(f) - \widehat{\sE}(f) \right| \le \frac{8KRL}{c_0}\sqrt{\left(\sum_i n_i\right)^{-1} \frac{\log(2/\delta)}{2}},
$$
where $c_0$ is the noise level lower bound.
This can be seen by plugging in the optimal weights and simplifying.

Denote
\begin{align*}
J(f) &= \sum_{i=1}^{N} \frac{w_i}{n_i} \sum_{j=1}^{n_i} \ell_{\a_i}^{\rho_i}(f(X_{ij}),\Yt_{ij}) + \lambda
\|f\|^2\\
&=\widehat{\sE}(f) + \lambda\|f\|^2.
\end{align*}
Observe that $J(\widehat{f}) \leq J(0) = \varphi(0)$. Therefore $\lambda
\|\widehat{f}\|^2 \leq \varphi(0) - \widehat{\sE}(\widehat{f})\leq \varphi(0)$ and
so $\|\widehat{f}\|^2 \leq \varphi(0)/\lambda$.

Set $R = \sqrt{\varphi(0)/\lambda}$. Note that $R$ grows asymptotically because $\lambda$ shrinks. We just saw that $\widehat{f} \in B_k(R)$.

Let $\epsilon>0$. Fix $f_\epsilon \in \sH_k$ s.t. $\sE(f_\epsilon) \leq \sE^* +
\epsilon/2$, possible since $k$ is universal \citep{steinwart08}. Note that $f_\epsilon \in B_k(R)$ for $\sum_i n_i$ sufficiently
large. In this case the generalization error bound implies that with probability $\geq
1-\delta$,
\begin{align*}
\sE(\widehat{f}) &\leq \widehat{\sE}(\widehat{f}) +
\frac{8KRL}{c_0}\sqrt{\left(\sum_i n_i\right)^{-1} \frac{\log(2/\delta)}{2}}\\\nonumber
&\leq \widehat{\sE}(f_\epsilon) + \lambda\|f_\epsilon\|^2 -
\lambda\|\widehat{f}\|^2 + \frac{8KRL}{c_0}\sqrt{\left(\sum_i n_i\right)^{-1} \frac{\log(2/\delta)}{2}}\\\nonumber
&\leq \widehat{\sE}(f_\epsilon) + \lambda\|f_\epsilon\|^2 +
\frac{8KRL}{c_0}\sqrt{\left(\sum_i n_i\right)^{-1} \frac{\log(2/\delta)}{2}} \\
&\leq \sE(f_\epsilon) + \lambda\|f_\epsilon\|^2 +
\frac{16KRL}{c_0}\sqrt{\left(\sum_i n_i\right)^{-1} \frac{\log(2/\delta)}{2}}.
\end{align*}
Taking $\delta = (n_1 + \cdots + n_N)^{-1}$, the result now follows.

\subsection{Proof of Proposition \ref{prop:conv2}}

Through simple algebra it can be shown that for each $i$,
$$
(\ell_{\a_i})^{\rho_i}(t,1) = \frac1{\g_i^+ - \g_i^-} [(1 - \g_i^-)\ell(t,1) - \g_i^- \ell(t,-1)]
$$
and
$$
(\ell_{\a_i})^{\rho_i}(t,-1) = \frac1{\g_i^+ - \g_i^-} [\g_i^+\ell(t,-1) - (1 - \g_i^+) \ell(t,1)].
$$
Applying \eqref{eqn:second2} it now follows that
\begin{align*}
J''(t) &= \ell''(t,1) \sum_i \frac{w_i}{n_i(\g_i^+ - \g_i^-)} \sum_j \Bigg[ (1 - 2\g_i^-) \ind{\Yt_{ij}=1} + (2\g_i^+ - 1) \ind{\Yt_{ij}=-1} \Bigg] \\
&= \ell''(t,1) \sum_i \frac{w_i}{n_i(\g_i^+ - \g_i^-)} \Bigg[ (1 - 2\g_i^-) n_i^+ + (2\g_i^+ - 1) n_i^- \Bigg].
\end{align*}
The result now follows.


\end{document}